\DeclareFixedFont{\ttb}{T1}{txtt}{bx}{n}{8} 
\DeclareFixedFont{\ttm}{T1}{txtt}{m}{n}{8}  
\definecolor{deepblue}{rgb}{0,0,0.5}
\definecolor{deepred}{rgb}{0.6,0,0}
\definecolor{deepgreen}{rgb}{0,0.5,0}
\definecolor{deeporange}{rgb}{0.6,0.25,0}
\definecolor{verylightgray}{rgb}{0.97,0.97,0.97}
\definecolor{mydarkblue}{rgb}{0,0.08,0.45}
\definecolor{bluegray}{rgb}{0.3,0.3,0.6}
\newcommand\pythonstyle{\lstset{
language=Python,
basicstyle=\footnotesize\ttm,
keywordstyle=\ttb\color{deepblue},
commentstyle=\ttm\color{deepgreen},
stringstyle=\color{deeporange},
emphstyle=\ttb\color{deepred},    
emph={MyClass,__init__},          
otherkeywords={self,yield},       
frame=single,                     
showstringspaces=false, 
breaklines=true,
backgroundcolor=\color{verylightgray},
}}
\newcommand\pythoninline[1]{{\pythonstyle\lstinline!#1!}}
\declaretheorem[name=Theorem,style=theorem]{thm}
\DeclareFixedFont{\ttb}{T1}{txtt}{bx}{n}{8} 
\DeclareFixedFont{\ttm}{T1}{txtt}{m}{n}{8}  
\title{Automatic Differentiation Variational \\ Inference  with Mixtures}
\author{ {\bf Warren R. Morningstar\thanks{\quad Primary contact; \href{mailto:wmorning@google.com}{wmorning@google.com}.}} \\
Google Research \\
\And
{\bf Sharad M. Vikram}  \\
Google Research \\
\And
{\bf Cusuh Ham\thanks{\quad Work performed during an internship with Google Research.}} \\
Georgia Institute of Technology \\
Google Research \\
\And 
{\bf Andrew G. Gallagher} \\
Google Research \\
\And
{\bf Joshua V. Dillon\thanks{\quad Primary contact; \href{mailto:jvdillon@google.com}{jvdillon@google.com}.}} \\
Google Research \\
}
\newcommand{\E}{\mathbb{E}}
\newcommand{\KL}[2]{D_{\text{KL}}(#1, #2)}
\begin{document}

\maketitle

\begin{abstract}
    
Automatic Differentiation Variational Inference (ADVI) is a useful tool for efficiently learning probabilistic models in machine learning.  Traditionally,
approximate posteriors learned by ADVI are forced to be unimodal in order to facilitate use of the reparameterization trick. In this paper, we show how stratified sampling may be used to enable mixture distributions as the approximate posterior, and derive a new lower bound on the evidence analogous to the importance weighted autoencoder (IWAE).  We show that this ``SIWAE'' is a tighter bound than both IWAE and the traditional ELBO, both of which are special instances of this bound.  We verify empirically that the traditional ELBO objective disfavors the presence of multimodal posterior distributions and may therefore not be able to fully capture structure in the latent space.  Our experiments show that using the SIWAE objective allows the encoder to learn more complex distributions which contain multimodality, resulting in higher accuracy, better calibration, and improved generative model performance in the presence of incomplete, limited, or corrupted data.
\end{abstract}

\section{Introduction}
Variational inference has become a powerful tool for Bayesian modeling using deep neural networks, with successes including image generation \citep{Kingma:14}, classification \citep{Alemi:17}, uncertainty quantification \citep{Snoek:19} and outlier detection \citep{Bishop:94,Nalisnick:18}.  Much of the recent success in variational inference have been driven by the relative ease of fitting models using ADVI, where small numbers of samples can be used for individual forward passes through a model, and noisy but unbiased gradients can be determined using the reparameterization trick, allowing the use of backpropagation in training and enabling traditional stochastic gradient methods \citep{Rezende:14,Kingma:14}.  Currently, one major limitation of ADVI is that it is only possible if the posterior distribution is reparameterizable. This has to date forced ADVI methods to utilize a limited set of possible distributions.  While there have been developments in extending reparameterization to broader classes of distributions  \citep[e.g., gamma and beta distributions; ][]{Ruiz:16}, multimodal distributions have remained elusive.

This paper explores using ADVI with mixture posterior distributions.  Mixture distributions present an advantage over unimodal distributions due to their flexibility \citep{Bishop:98,West:93}.  The contributions of this paper are as follows:
\begin{enumerate}
    \item We propose the SIWAE, a new lower bound on the evidence for the specific case of a mixture variational posterior. When applicable, the SIWAE is tighter than the evidence or importance-weighted evidence lower bounds.
    \item We demonstrate using a toy problem that SIWAE is better suited to approximate a known posterior distribution than the traditional ELBO or the score function estimator.
    \item We empirically show that models trained using the traditional ELBO objective often fail to discover multimodality in the latent space even if mixtures are used for the posterior.  We also show that SIWAE allows models to more easily infer multimodality when it exists.
    \item We demonstrate that models trained with SIWAE achieve higher classification accuracy and better model calibration than ELBO using incomplete feature information.

\end{enumerate}

\section{Approach}\label{sec:approach}

Consider a simple latent variable model with a single observed data point $x$ and corresponding latent variable $z$ along with a prior distribution $r(z)$ and likelihood $p(x | z)$. In probabilistic modeling, we are interested in the posterior distribution
$p(z | x)$, but generally, computing the posterior analytically is intractable. Variational inference is a strategy that reframes Bayesian
inference as an optimization problem by first introducing a surrogate variational posterior $q_\phi(z | x)$, where $\phi$
are free parameters, and then maximizing the evidence lower bound (ELBO) with respect to $\phi$. The ELBO is defined as,
\begin{align}
    \mathcal{L}_{\text{ELBO}}(\phi) &\triangleq \E_{q_\phi(z | x)}\left[\log{p(x|z)}\right] - \KL{q_\phi(z | x)}{r(z)}
\end{align}
and is a lower bound on the marginal probability of the data $\log p(x)$ \citep{Jordan:99}. In ADVI, we aim to compute $\nabla_\phi \mathcal{L}(\phi)$, but computing the ELBO is analytically intractable.
Both terms in $\mathcal{L}(\phi)$ are expectations over $q_\phi(z | x)$, so we approximate the
gradient by first drawing samples from $q_\phi(z | x)$ and computing the gradient of a Monte-Carlo approximation
of the ELBO, i.e.,
for a single sample $z^\prime \sim q_\phi(z | x)$, we see that  
   $\mathcal{L}_{\text{ELBO}}(\phi) \approx \log{p(x|z^\prime)} - \log q_\phi(z^\prime | x) + \log r(z^\prime)$.
   
When computing the gradient, ADVI differentiates through the sampling procedure itself, utilizing the \emph{reparameterization trick} \citep{Kingma:14, Rezende:14}. The reparameterization trick expresses sampling a random variable $z$ from its distribution as a transformation of noise drawn from a base distribution $\epsilon \sim p(\epsilon)$, where the transformation is a deterministic function of the parameters of the sampling distribution $\phi$.
In ADVI, we are restricted to ``reparameterizable'' posterior distributions -- distributions whose sampling procedure can be expressed in this way.
Although there has been notable work in growing this class of distributions, such as in \citet{Figurnov:18} and \citet{Jankowiak:18}, the choice of posterior in ADVI remains limited.

In this paper, we consider mixture posteriors for ADVI, specifically mixtures whose component distributions are reparameterizable.
Mixture distributions are a powerful class of posteriors, as growing the number of components can make them arbitrarily expressive,
but are challenging to use as posteriors in ADVI as sampling from a mixture is not naively reparameterizable, due to the discrete categorical variable that is sampled to assign a data point to a mixture component.
As seen in \citep{Roeder:17}, \emph{stratified sampling} can address this issue. In stratified sampling, we compute expectations by
sampling evenly over component distribuions (``strata'') and averaging using the weights of each stratum. For a mixture distribution, the natural stratification is each of the mixture component distributions. Rather than initially drawing an assignment
and then drawing a sample from the corresponding component distribution, we
draw one sample from each component individually and compute
a weighted average over the samples.
Formally, for any continuous and differentiable function $f(z)$ and mixture distribution
$q(z) \triangleq \sum_{k = 1}^K \alpha_k q_k(z)$, where $\alpha_k$ are the mixture weights and $q_k(z)$ are the components,
we can compute the expectation $\mathbb{E}_{q(z)}{f(z)}$ as follows:
\begin{align}
    \mathbb{E}_{q(z)}{f(z)} &= \int{f(z)\left(\sum_{k=1}^{K}{\alpha_{k}q_k(z)}\right) dz}
    = \sum_{k=1}^{K}{\alpha_{k}\int{f(z)q_{k}(z)}dz} \nonumber
    = \sum_{k=1}^{K}\alpha_{k}\E_{q_k(z)}\left[f(z)\right]
\end{align}
By pulling the sum over the mixture components outside of the integral over $z$ and sampling from each of the $K$ mixture components, we are able to compute the expectation
using the reparameterization trick, so long as the component distributions from the mixture are themselves reparameterizable.
Returning to ADVI, when the posterior $q_\phi(z | x)$ is a mixture distribution with weights $\{\alpha_{k, \phi}(x)\}_{k = 1}^K$ and components $\{q_{k, \phi}(z | x)\}_{k = 1}^K$, we can compute the ``stratified ELBO,'' or SELBO:
\begin{align*}
    &\mathcal{L}_{\text{SELBO}}(\phi) \triangleq 
     \sum_{k = 1}^K \alpha_{k, \phi}(x) \E_{z_k \sim q_{k, \phi}(z | x)}\left[\log \frac{p(x|z_k)r(z_k)}{q_\phi(z_k | x)}\right]
\end{align*}
While SELBO is technically the same objective as the ELBO but specialized
to mixtures, we draw this distinction to imply that we are drawing $K$
reparameterizable samples to compute a differentiable, Monte-Carlo estimate
of the SELBO
whereas the traditional ELBO formulation implies we take a single sample
to compute a non-differentiable estimate.

\subsection{A tighter bound for mixture posteriors}
While the SELBO objective allows us to fit a mixture posterior using ADVI, 
it falls prey to the same issues that make fitting multimodal distributions
with the ELBO difficult, namely the ELBO's mode-seeking behavior. Furthermore this
mode-seeking behavior actively works against the goal of learning a multimodal posterior. Assuming a truly multimodal posterior, an under-expressive unimodal distribution fit using the ELBO would only learn one mode.
A multimodal distribution would achieve a better fit, but the nature of the
ELBO objective makes this difficult, too.
A multimodal posterior will have regions of high density separated by regions
of low density and these low density regions penalize exploration.
A multimodal distribution being fit to this posterior will be penalized for
generating samples in any of these areas of low-density, and therefore will
be discouraged from exploring the landscape for surrounding modes. 
We thus expect ADVI to collapse mixture components to learn a conservative approximate posterior, neglecting to explore other distinct modes which may also be able to explain the data. To mitigate this exploration penalty, we can use importance sampling.

An importance-weighted estimate of the log-likelihood first draws $T$ i.i.d. samples from the posterior $\{z_t\}_{t = 1}^T \sim q_\phi(z | x)$, computing a lower bound using the ratio of the densities of a sample under the joint distribution and posterior (i.e., importance weights) for each sample (called ``IWAE'' in \citet{Burda:16}):
\begin{align*}
\mathcal{L}_{\text{IWAE}}^T(\phi) \triangleq \E_{\{z_t \sim q_\phi(z | x)\}_{t = 1}^T} \left[\log \frac{1}{T} \sum_{t = 1}^T \frac{p(x | z_t) r(z_t)}{q_\phi(z_t | x)}\right]
\end{align*}

\citet{Burda:16} shows that if the importance weights are bounded, then as $T$ increases the IWAE grows tighter and approaches $\log p(x)$ as $T \rightarrow \infty$. Unlike the regular ELBO,
the posterior in the IWAE is less penalized for generating samples that are unlikely. Instead, unlikely
samples are weighted less and the learned posterior can have higher variance to better cover
the space. This is a desirable property to avoid component collapse when
fitting mixture distributions using ADVI since it enables components
to explore distinct modes.

Our main contribution is a novel importance-weighted estimator for the ELBO when using mixture posteriors. To incorporate importance sampling into the SELBO, we first draw $T$ samples from each of the mixture components, $\{z_{kt}\}_{k = 1, t = 1}^{K, T}$. We then
compute importance weights that are themselves weighted by the mixture weights, arriving at the ``stratified IWAE,'' or SIWAE:
\begin{align}
    &\mathcal{L}_{\text{SIWAE}}^T(\phi)
    \triangleq \E_{\{z_{kt} \sim q_{k, \phi}(z | x)\}_{k=1,t=1}^{K, T}} \Bigg[
    \log \frac{1}{T} \sum_{t = 1}^T \sum_{k = 1}^K \alpha_{k, \phi}(x) \frac{p(x|z_{kt})r(z_{kt})}{q_{\phi}(z_{kt}|x)} \Bigg]
\end{align}

By repeated application of Jensen's equality, we can
demonstrate that $\mathcal{L}_\text{SIWAE}^T$ is a valid lower bound
that is tighter than $\mathcal{L}_\text{IWAE}^T$ when $K > 1$ (see theorems and proofs in \Cref{sec:theorems}).
$\mathcal{L}_\text{SIWAE}$ is also equivalent to $\mathcal{L}_\text{IWAE}^T$ and $\mathcal{L}_\text{SELBO}$ under certain circumstances ($K = 1$ and $K = T = 1$, respectively). 
Because $\mathcal{L}_\text{IWAE}$ is tighter than $\mathcal{L}_\text{SELBO}$ even when $T = 1$, $\mathcal{L}_\text{SIWAE}$ is also tighter
than $\mathcal{L}_\text{SELBO}$. Furthermore the importance sampling step enables higher-variance posteriors, as it 
mitigates the penalty for low-likelihood samples. Consequently, the implicit posterior \citep{Cremer:17} (defined by importance sampling the learned posterior) can better capture different modes.
Furthermore, SELBO and SIWAE are both easy to implement and are simple augmentations of existing variational inference code. See \autoref{fig:siwae-code} for a code snippet in TensorFlow \citep{Abadi:16} which evaluates the SIWAE for a latent variable model.

\begin{figure}[t]
\begin{python}
def siwae(prior, likelihood, posterior, x, T):
  q = posterior(x)
  z = q.components_dist.sample(T)
  z = tf.transpose(z, perm=[2, 0, 1, 3])
  loss_n = tf.math.reduce_logsumexp(
    (- tf.math.log(T) + tf.math.log_softmax(mixture_dist.logits)[:, None, :]
     + prior.log_prior(z) + likelihood(z).log_prob(x) - q.log_prob(z)),
    axis=[0, 1])
  return tf.math.reduce_mean(loss_n, axis=0)
\end{python}
\setlength{\belowcaptionskip}{-0.5cm}
\caption{\href{http://tensorflow.org/probability/}{TF Probability} implementation of SIWAE loss for local latent variable models (e.g., VAE).}
\label{fig:siwae-code}
\end{figure}

\section{Related Work}
\citet{Salimans:13} and \citet{Kingma:14} show that sampling from a distribution can be reparameterized as a deterministic function of the parameters of the distribution and some auxiliary variable, thereby facilitating the propagation of gradients through the distribution. They also introduced the Variational Auto Encoder (VAE), which uses an amortized variational posterior for a deep generative model.  \citet{Burda:16} showed that the bound on the evidence could be tightened using importance sampling, and that the tightness of the bound was improved by the number of importance samples. \citet{Cremer:17} suggest that the IWAE can be viewed as fitting the traditional ELBO, but using a richer latent space distribution defined by the importance-reweighting of samples from the posterior, and further explore the functional forms of these implicit posteriors.

While our work explores mixtures for the variational posterior, others have studied the use of (trainable) mixtures for the prior.  \citet{Dilokthanakul:16, Johnson:16, Jiang:17} introduce a VAE which uses a learnable mixture of Gaussians to represent the prior distribution of a latent variable. Learning a mixture prior does not require differentiating through the prior's sampling procedure as VI samples are drawn from the posterior not prior. \citet{Dilokthanakul:16,Jiang:17} find that their models achieve competitive performance on unsupervised clustering, with the mixture components learning clusters that approximate the different classes present in the data. Similarly, \citet{Tomczak:17} use a mixture of Gaussians trained on learnable pseudo-inputs as the prior, which allows them to introduce greater flexibility in the latent space distribution. They find that their generative performance improves on a number of benchmarks using this procedure. While using a mixture distribution as a prior enables modeling global structure in the latent space, it does not explicitly model ambiguity or competing explanations for a single observation. The uses of mixture distributions for either the prior or posterior are orthogonal and complementary, and a mixture distribution in either part of the model is a valid option.

\citet{Domke:19} propose to use alternative sampling schemes (including stratified) from a uniform distribution defined over a state space, along with a coupling transformation to the latent space in order to design a sampling scheme which results in better coverage of the approximating posterior distribution. They also show that the divergence of this approximation from the true posterior is bounded by the looseness of the evidence bound.  

When using mixture distributions as the posterior, the typical strategy is to fix component weights \citet{oh2018modeling}, or by using a continuous relaxation (e.g., the concrete relaxation of the categorical distribution \citet{Poduval:20}).  \citet{Graves:16} proposes an algorithm that allows for gradients to be backpropagated through the mixture distribution when the component distribution have diagonal covariances by composing the sampling procedure as a recursion over the dimensions.  Our method only requires that the component distributions is subject to reparameterization, and therefore can be used with a wider class of distributions. Furthermore it does not require explicit specification of the gradient updates to be hard-coded, making it easy to integrate mixtures into existing models. \citet{Roeder:17} derives a pathwise gradient extension to the SELBO that lowers the variance of gradient estimates, but still suffers from the mode-seeking properties of the SELBO.


\section{Experimental Results}
In this section, we compare experimental results with both deterministic models containing no latent variable, and with those containing only a single component parameterizing the latent space distribution. In \Cref{sec:full_im_vae} and \Cref{sec:mnist-style} are additional experiments exploring extensions
to those in the main body of the paper.
\subsection{Toy Problem}

\begin{figure*}[t]
    \centering
    \begin{subfigure}[t]{0.2\textwidth}
    \includegraphics[width=\hsize]{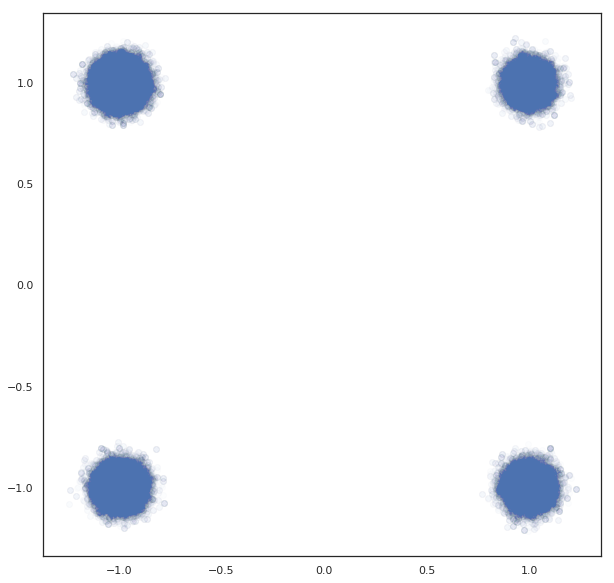}
    \caption{True posterior}
    \end{subfigure}%
    ~
    \begin{subfigure}[t]{0.2\textwidth}
    \includegraphics[width=\hsize]{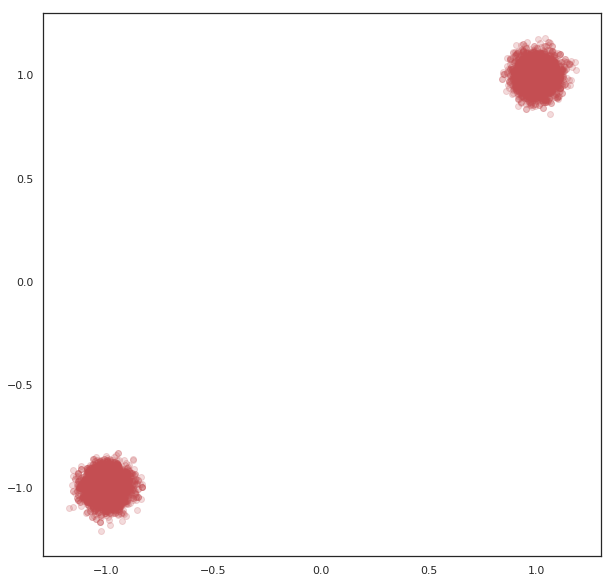}
    \caption{SELBO posterior}
    \end{subfigure}%
    ~
    \begin{subfigure}[t]{0.2\textwidth}
    \includegraphics[width=\hsize]{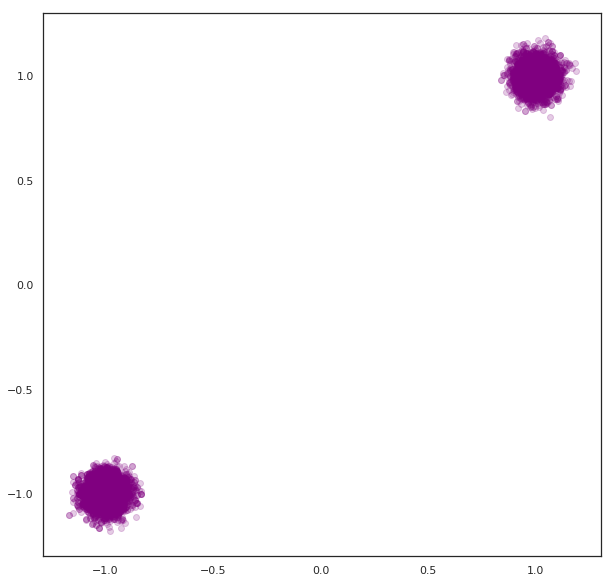}
    \caption{Score function posterior}
    \end{subfigure}%
    ~
    \begin{subfigure}[t]{0.2\textwidth}
    \includegraphics[width=\hsize]{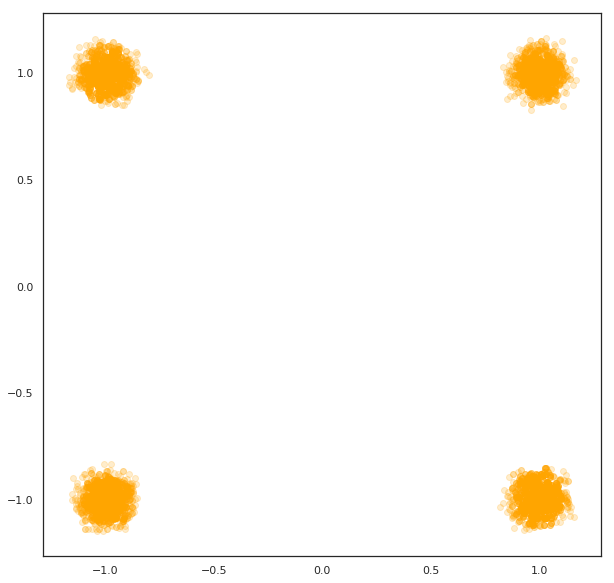}
    \caption{SIWAE posterior}
    \end{subfigure}%
    \setlength{\belowcaptionskip}{-0.5cm}
    \caption{We sample the true posterior along with each of the learned (implicit) posteriors for the observed data point $(1, 1)$. We see that the SELBO- and score function-trained posteriors are unable to capture
    all 4 modes of the true posterior.}
    \label{fig:toy-posteriors}
\end{figure*}

We define a latent variable model where the true posterior is multimodal by construction, with the hope
of recovering the distinct modes.
Specifically, we sample 1000 datapoints from the following two-dimensional generative model:
\begin{equation}
z \sim \mathcal{N}(0, I) \quad\quad\quad
x \sim \mathcal{N}\left(|z|, \sigma^2I\right) \nonumber
\end{equation}
where $\sigma^2 = 5\mathrm{e}{-2}$, i.e., we first sample a latent $z$ from an isotropic normal, but
observe $|z|$ with some Gaussian noise. For an observed $x$, 
there are $4$ distinct modes in $z$-space that could have generated it, since $z$ is two-dimensional.
We initialize the variational posterior
$q_\phi(z | x)$ as a multilayer perceptron (MLP) with 2 layers of 100 hidden units that outputs a 4-component mixture of Gaussians
distribution. We evaluate three different estimators of the ELBO: (1) SELBO, (2) SIWAE, and (3) a score function estimator as a baseline. We fit the posterior for 1000 epochs, with a batch size of 32 and using the
Adam \citep{Adam} optimizer with a learning rate of 0.001, using 10 importance samples for SIWAE and 100 for both SELBO and score function. Each baseline
was initialized and trained identically (same initial weights and order of batches).

We measure performance using a $10^{6}$-sample SIWAE estimate, and observe that the SIWAE-trained estimator achieves the highest value of -1.505, compared to -2.024 and -2.038 from the SELBO and score function estimators, respectively.
Investigating further, we plot samples from each of the implicit importance-weighted posteriors in the latent space. We find that in many cases, the SELBO and score function posteriors are unable to capture
the four distinct modes (see \autoref{fig:toy-posteriors}), whereas the higher-variance SIWAE posterior is able to cover
the modes successfully. We also observe similar results to those found in \citet{Rainforth:18},
where tighter variational bounds result in lower signal-to-noise ratios in the gradients
to the posterior. This is reflected by on-average higher-variance gradients while training
a SIWAE posterior vs. a SELBO posterior (1.16 vs. 0.48 average elementwise variance, respectively). However, the score function estimator has significantly higher empirical variance (261.4) than that of both SIWAE and SELBO, indicating that the variance
reduction coming from the use of the reparameterization trick offsets the additional variance from a tighter
variational bound. We also found that using the ``sticking-the-landing'' (stl) estimator \citep{Roeder:17}
(\autoref{fig:toy-dataset-stl}, \autoref{fig:toy-posteriors-stl}) 
does not significantly improve the SELBO or SIWAE in the toy experiment.

\subsection{Single Column MNIST Classification}\label{sec:VIB}

\begin{figure*}[ht]
    \centering
    \includegraphics[width=\hsize]{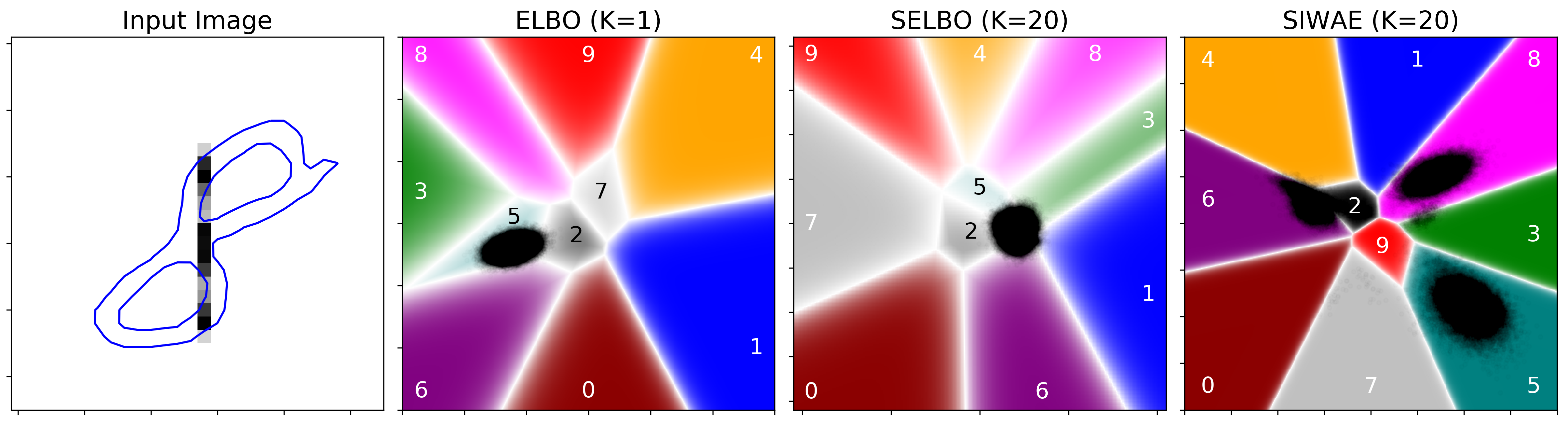}
    \caption{An illustration of the difference between the latent spaces learned by mixtures versus those found using unimodal variational posterior distributions.  The left panel shows the input to the network, for which all but the center-most column has been discarded.  The contour shows the true image of the data.  The second column shows a model trained using a unimodal latent space distribution and optimizing the ELBO.  The third column shows the latent space learned using the SELBO objective with 20 mixture components.  On the right, we show the latent space for the same example found using the SIWAE objective.  The latent space is colored by the predicted class from that position in the latent space, and the transparency of that color indicates the confidence of the predicted class relative to the second most probable class.} 
    \label{fig:local_elbo_latent}
\end{figure*}

\begin{figure*}
    \includegraphics[width=\hsize]{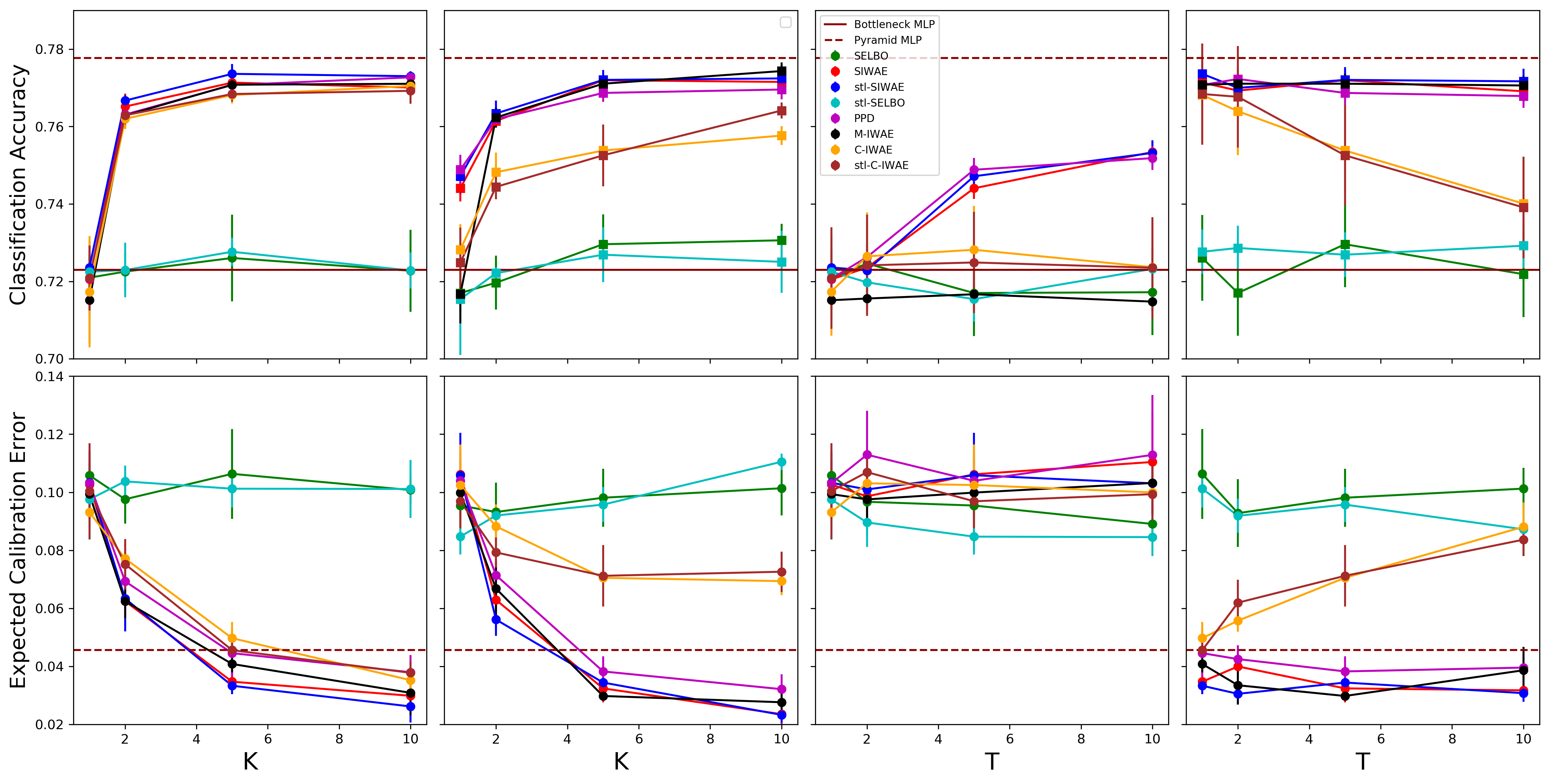}
    \caption{\textbf{Top row:} Classification accuracy of a model trained using the SELBO and SIWAE objectives as we vary the number of mixture components ($K$, left two panels) and the number of samples per component ($T$, right two panels).  \textbf{Bottom row:} Expected Calibration Error as a function of the number of mixture components $K$, or samples $T$.}\label{fig:ece}
\end{figure*}
\vspace{-5pt}

To evaluate SIWAE's efficacy on a more challenging problem, 
we trained a classifier on the benchmark dataset MNIST \citep{Lecun:98}.
The classifier is a Variational Information Bottleneck (VIB) model \citep{Alemi:17}, a variant of the VAE where the decoder outputs a class rather than a reconstructed input. To better motivate the use of mixtures on this dataset, we consider the problem of classification under incomplete information.  In particular, \citet{Doersch:16} shows that training a VAE using only the centermost column of the image introduces multimodality into the dataset that is difficult to capture using a unimodal encoder. We replicate this multimodality in the classification setting by taking the centermost column of each training image.  An example of a corrupted input can be seen in \autoref{fig:local_elbo_latent}. In general, it can be difficult even for a human to correctly classify the image given this type of corruption. In this scenario, we look for not only accurate predictions but also well-calibrated uncertainty for those predictions. 

For this set of experiments, we consider how classification performance varies as the number of mixture components are varied.  For this, we train models using $K=[1, 2, 5, 10]$ for the number of mixture components.  We also use $T=[1, 2, 5, 10]$ for the number of samples drawn \emph{per component}. For a single component model, we optimize both the traditional evidence lower bound (ELBO), as well as the importance weighted estimate of the evidence (IWAE). For the mixture models, we use stratified sampling to compute the ELBO (SELBO), as well as the Stratified-IWAE (SIWAE) derived in Section~\ref{sec:approach}.  In addition to comparing IWAE, ELBO, SIWAE, and SELBO, we also compare these models to other models trained using several additional losses suggested by previous works to potentially improve on IWAE-like models.  In particular, we trained models using the C-IWAE and M-IWAE from \citet{Rainforth:18}, which reduce the gradient variance and purportedly result in better models than those trained with IWAE.  For both of these, we use an appropriately modified SIWAE for $K>1$. We also train models using the stl gradient estimator from \citet{Roeder:17} which was also suggested to improve variance in the gradient estimates.  We added the stl estimator to SELBO, SIWAE, and C-IWAE to examine if the stl estimator has an effect on performance compared to models which use the naive implementations of the gradient.  Finally, we compare to an additional loss which maximizes the log-probability from the posterior predictive distribution:  $\mathcal{L}_{PPD} = -\log{( 1/TK \sum_{t, k=1}^{T, K}p(x|z))} + \KL{q(z|x),r(z)}$.  Additional details about the training procedure can be found in \Cref{sec:experiments_details}.

To evaluate the accuracy of the model, we first compute the predictive distribution by decoding $10^{4}$ samples from $q_{\phi}(z|x)$ and averaging the class probabilities returned by each sample. This marginalizes over the uncertainty in the latent variables and if our prior beliefs are correct, nominally produces calibrated probabilities. The predicted class is the one with the largest probability under the predictive distribution,
and accuracy of these predictions is measured on the test set.

In the middle two columns of \autoref{fig:local_elbo_latent}, we visualize samples from the posterior of a single validation set example learned by optimizing the ELBO/SELBO objective.  We find that, while SELBO enables the use of multiple mixture components in the variational posterior distribution, the model only learns a unimodal representation of the latent variable.  This is a direct consequence of the ELBO objective, which disincentivizes exploration and encourages mode-seeking in the variational posterior.  In this case, we observe the posterior ``hedging its bets,'' where the single mode sits across several decision boundaries.  These decision boundaries are also quite wide, suggesting that the model is using variance in the decoder as a source of uncertainty. We find this behavior undesirable, and show later that it negatively affects how well calibrated the model is.

The rightmost column of \autoref{fig:local_elbo_latent} shows the latent space learned by optimizing the SIWAE objective.  In stark contrast to models trained with SELBO, we find that SIWAE learns posteriors that have many active and distinct modes.  This implies that rather than ```hedging its bets'' as in the SELBO, a SIWAE-trained posterior offers multiple competing explanations, moving the uncertainty in the final prediction into the latent space rather than the output of the decoder.  This can be directly seen by looking at the lightness of the background colors in \autoref{fig:local_elbo_latent}, which indicate the confidence in the decoder prediction (less transparent, more saturated colors indicate more confidence in a prediction and vice-versa). Where the SELBO-trained decoder tends to have fuzzier, more transparent decision regions, the SIWAE-trained decoder has sharper, more confident decision boundaries. We later see how this property is critical for well-calibrated predictions.  Furthermore, while it is difficult to evaluate the interpretability of the latent space quantitatively, the SIWAE models are qualitatively easier to interpret using the latent space, with the model very clearly predicting the example shown as either a 5, an 8, or a 6 (with some additional limited probability that it is a 3). This appears to reflect our own intuition of the output class of this example.

We further investigate the quantitative model performance achieved by training with each objective as the number of mixture components and number of samples are varied.  \autoref{fig:ece} shows the classification accuracy of a VIB model over a range of $K$ and $T$.  From these results we make several observations:  1)  We find that SELBO-like losses disfavor multimodality (as seen in \autoref{fig:local_elbo_latent}), and therefore offer no improvement with additional mixture components or samples.  
2) We find that SIWAE-like losses overcome these deficiencies and therefore offer increased accuracy with additional mixture components (and samples for $K=1$).  For large $K$, $T$, the performance approaches the deterministic baseline, but does so using far fewer parameters.  3)  We find that the ``sticking the landing'' gradient estimator offers little benefit, since the accuracy returned from all corresponding naive implementations of each loss have comparable performance.   4)  We find that C-IWAE and stl-C-IWAE do not offer improvement over SIWAE, performing equally when $T=1$, and worse otherwise.  5)  We find that PPD generally performs comparably to SIWAE and stl-SIWAE, possibly because it also uses the latent space to codify uncertainty.  6) We find that M-IWAE exhibits similar performance to SIWAE as a function of $K$, but observes no improvement with $T$.  This behavior is expected, as M-IWAE is equivalent to SIWAE over $K$, and SELBO over $T$.  While we find the largest improvement results from using the SIWAE objective, it is important to note that arbitrarily growing the number of importance samples may also be harmful, a phenomenon observed by \citet{Rainforth:18}. We do not see any evidence for this over the range of $T = 1 \rightarrow 10$ importance samples, suggesting that positive effect of importance sampling enabling fitting better mixture models outweighs the negative effect of worse gradients.  However we also speculate that since the gradient variance scales as $T^{0.5}$, the performance may turn over for sufficiently large $T$.

While our SIWAE models with large $K$ appear to achieve comparable accuracy to the deterministic baseline, it is also important to compare their calibration, since real-world decision-making systems not only require accurate models, but also ones which quantify their uncertainty correctly. To measure the calibration objectively, we use the \emph{Expected Calibration Error} (ECE) from \citet{Guo:17} both for the deterministic baseline, as well as for the models trained with SELBO and SIWAE.  For the SELBO and SIWAE models, we use the posterior predictive distribution to evaluate the ECE, marginalized over 10000 samples from $q_{\phi}(z|x)$. We show the ECE as a function of the number of mixture components for our models in the bottom half of (\autoref{fig:ece}). In addition to the improvements in accuracy noted above, we find that increasing $K$ also decreases the calibration error when training using SIWAE-like objectives, outperforming the deterministic baseline for $K>2$, where the SIWAE objective should maximize a better approximation of the evidence.  We also find that models trained with SELBO do not have a corresponding improvement in calibration, with a final ECE more than 5 times larger than the equivalent model trained with SIWAE.  Additionally, models trained with SELBO never surpass the deterministic baseline of 4.5\% calibration error, in contrast to SIWAE models which achieve 2\% ECE for $K=10$.  This confirms our hypothesis that having predictive uncertainty stem from the decoder worsens the model calibration and that having multiple competing explanations in the latent space results in better calibrated predictions. See \Cref{sec:mnist-1d} for results in one dimensional latent space.

\vspace{-5pt}
\subsection{Single Column MNIST VAE}\label{sec:single_column_vae}
The SIWAE objective appears to successfully infer latent structure indicative of class boundaries using only a single column of the image.  However, a different and equally intriguing question is if this representation is also sufficient to reconstruct the image itself.  This question was explored by \citet{Doersch:16}, who showed that a class-conditional VAE was necessary to break the class degeneracy that can exist when the images are a single column.  Our hypothesis was that the use of a mixture posterior distribution can replicate this conditionality, without using the class labels.  

\begin{figure*}
    \centering
    \includegraphics[width=\hsize]{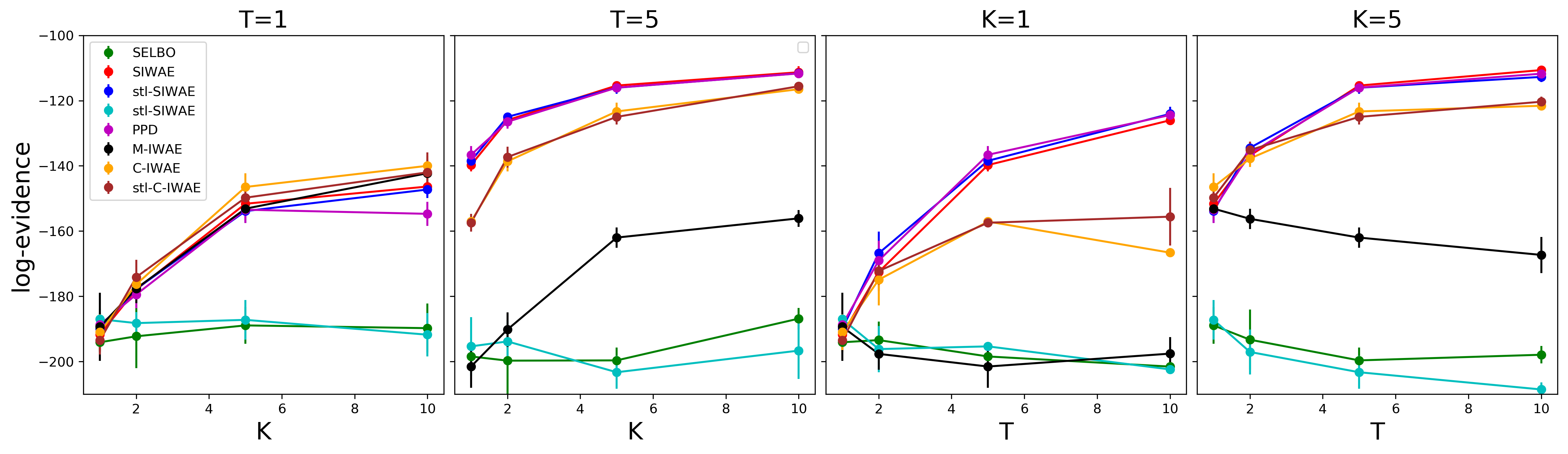}
    \setlength{\belowcaptionskip}{-0.5cm}
    \caption{Model evidence as a function of the number of mixture components $K$ or the number of samples per component $T$.  The evidence was measured using a single SIWAE estimate with 100 samples.  We find that models trained with SELBO-like losses appear to offer little to no noticeable improvement with either $K$ or $T$, while SIWAE-like losses offer substantial improvements with both.}
    \label{fig:singlecolumnevidence}
\end{figure*}

Our test setup is the same:  train a model with either the SIWAE or the SELBO loss, and observe performance as a function of $K$ and $T$.  This time, we use the log-evidence to measure performance, computed with a SIWAE estimate using 100 samples from the surrogate posterior.  We thought this was the most fair comparison, as it holds the total sample number fixed, and therefore highlights the difference based solely on the posterior expressiveness.  As with our classification experiments, we compare against stl-SIWAE, stl-SELBO, C-IWAE, stl-C-IWAE, M-IWAE, and PPD.

\Autoref{fig:singlecolumnevidence} shows the model evidence as a function of $K$ and $T$.  We find that for SIWAE-like trained models, the log evidence increases substantially with increasing $K$, indicative of the model successfully leveraging representational multimodality.  For SELBO-like losses, we observe no improvement with $K$ or $T$, indicating unimodality and unsuccessful posterior approximation. Consequently SELBO shifts uncertainty into the decoder, resulting in fuzzy, low confidence outputs (see \Cref{sec:experiments_appendix}). 
For comparison, in \Cref{sec:full_im_vae} we run the same experiment using full-image MNIST; results indicate that SIWAE provides benefits over SELBO in lower dimensional latent
spaces and these benefits diminish
as the dimensionality increases.


\vspace{-5pt}
\section{Conclusion}

We demonstrate that although stratified sampling enables ADVI with mixture posterior distributions, the ELBO impedes surrogate posterior multimodality. SIWAE, a tighter evidence lower bound analogous to the IWAE, utilizes stratification over posterior mixture components to make the bound tighter. We experimentally verify that SIWAE facilitates discovery of multimodality in the latent space, stratified ELBO does not, and that multimodality improves generative model performance, particularly for incomplete input data or low dimensionality representations.  We also show that SIWAE enables better classifier accuracy and calibration error and that both improve as as the number of components is increased.

\section*{Broader Impact}
Developing computationally efficient techniques which enable learning discrete structure in latent variable models is essential to the practitioner's ability to both understand and explain why a model makes the predictions it does. Moreover, offering the capacity to capture this structure--if truly present--implies a better calibrated model and therefore a more reliable model. Poorly understood models or inadequate calibration can result in misinformed decisions which may have dire consequences in certain situations.

\bibliographystyle{plainnat}
\small{
\bibliography{references}}

\clearpage
\newpage
\appendix



\newpage

\section{Theorems and proofs}\label{sec:theorems}

\begin{thm}
$\mathcal{L}_{\text{SIWAE}}^T$ is a lower bound on the evidence $\log p(x)$.
\begin{proof}
\begin{align*}
&\log p(x)=\\
&= \log \frac{1}{T} \sum_{t=1}^T \E_{z_t \sim q_\phi(z|x)} \left[ \frac{p(x|z_t)r(z_t)}{q_\phi(z_t|x)} \right]\\
&= \log \frac{1}{T} \sum_{t=1}^T \sum_{k=1}^K \alpha_{k, \phi}(x) \E_{z_{kt} \sim q_{k, \phi}(z |x)} \left[ \frac{p(x|z_{kt})r(z_{kt})}{q_\phi(z_{kt}|x)} \right]\\
&= \log \int \cdots \int \left[\prod_{t=1}^T \prod_{k=1}^K dz_{kt} q_{k, \phi}(z_{kt}|x) \right] \times \\
& ~{ } \quad\qquad \hphantom{ \log } \frac{1}{T} \sum_{t=1}^T \sum_{k=1}^K \alpha_{k, \phi}(x) \frac{ r(z_{kt}) p(x|z_{kt})}{q_\phi(z_{kt}|x)}\\
&\ge \int \cdots \int \left[\prod_{t=1}^T \prod_{k=1}^K dz_{kt} q_{k, \phi}(z_{kt}|x) \right] \times \\
& ~{ } \quad\qquad  \log  \frac{1}{T} \sum_{t=1}^T \sum_{k=1}^K \alpha_{k, \phi}(x)\frac{p(x | z_{kt})r(z_{kt})}{q_\phi(z_{kt}|x)}\\
&= \E_{\{z_{kt} \sim q_{k, \phi}(z | x)\}_{k=1,t=1}^{K, T}} \Bigg[\\
& \log \frac{1}{T} \sum_{t = 1}^T \sum_{k = 1}^K \alpha_{k, \phi}(x) \frac{p(x|z_{kt})r(z_{kt})}{q_{\phi}(z_{kt} | x)} \Bigg] \\
&\equiv \mathcal{L}_{\text{SIWAE}}^T(\phi)
\end{align*}

\end{proof}
\label{thm:theorem1}
\end{thm}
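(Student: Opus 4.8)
The plan is to run the standard importance-sampling-plus-Jensen argument, but carried out over the \emph{product} of the per-component sampling distributions so that the mixture weights $\alpha_{k,\phi}(x)$ enter linearly. First I would write the evidence as an importance-weighted expectation: assuming $q_\phi(z\mid x)$ has support containing that of $p(x\mid z)\,r(z)$ (an integrability hypothesis on the importance weights, mirroring \citet{Burda:16}), we have $p(x) = \E_{q_\phi(z\mid x)}\big[p(x\mid z)r(z)/q_\phi(z\mid x)\big]$, and averaging $T$ i.i.d.\ copies of this expectation leaves it unchanged, giving $p(x) = \frac{1}{T}\sum_{t=1}^T \E_{z_t\sim q_\phi(z\mid x)}\big[p(x\mid z_t)r(z_t)/q_\phi(z_t\mid x)\big]$.

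Next I would apply the stratification identity established in \Cref{sec:approach} — namely $\E_{q_\phi(z\mid x)}[f(z)] = \sum_{k=1}^K \alpha_{k,\phi}(x)\,\E_{q_{k,\phi}(z\mid x)}[f(z)]$ — to each of the $T$ expectations with $f(z) = p(x\mid z)r(z)/q_\phi(z\mid x)$. This rewrites $\log p(x)$ as $\log \tfrac{1}{T}\sum_{t=1}^T\sum_{k=1}^K \alpha_{k,\phi}(x)\,\E_{z_{kt}\sim q_{k,\phi}(z\mid x)}\big[p(x\mid z_{kt})r(z_{kt})/q_\phi(z_{kt}\mid x)\big]$.

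Then comes the one step that needs a little care: I would re-express this finite sum of single-variable expectations as a \emph{single} expectation over the joint product measure $\prod_{t,k} q_{k,\phi}(z_{kt}\mid x)$ on the $KT$ variables $\{z_{kt}\}$. This is legitimate because each summand depends on exactly one of the $z_{kt}$, and the remaining $KT-1$ component densities each integrate to one, so inserting the full product density and integrating over all $KT$ variables does not change the value. This is the main (mild) obstacle: it is pure bookkeeping, but it is the one place where one must confirm that nothing is lost in passing from a sum of marginal integrals to a single joint integral.

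Finally, setting $Y \triangleq \frac{1}{T}\sum_{t=1}^T\sum_{k=1}^K \alpha_{k,\phi}(x)\,p(x\mid z_{kt})r(z_{kt})/q_\phi(z_{kt}\mid x)$, which is a positive random variable under that product measure, concavity of $\log$ and Jensen's inequality give $\log \E[Y] \ge \E[\log Y]$; the right-hand side is exactly $\mathcal{L}_{\text{SIWAE}}^T(\phi)$ by definition, completing the bound. I would present the chain of equalities down to the joint-integral form, then the single Jensen step, then recognition of the definition — essentially the display already sketched in the excerpt, with the support/integrability caveat made explicit.
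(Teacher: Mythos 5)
Your proposal is correct and follows essentially the same route as the paper's proof: the same chain of equalities (importance-weighted evidence, stratification over components, rewriting as a single expectation under the product measure $\prod_{t,k} q_{k,\phi}(z_{kt}\mid x)$) followed by one application of Jensen's inequality. Your explicit note on the support/integrability condition for the importance weights is a reasonable addition that the paper leaves implicit.
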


\begin{thm}
When $K > 1$, $\mathcal{L}_\text{SIWAE}^T$ is a tighter lower bound than $\mathcal{L}_\text{IWAE}^T$.
\end{thm}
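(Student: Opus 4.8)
The plan is to exhibit $\mathcal{L}_{\text{SIWAE}}^T(\phi)$ and $\mathcal{L}_{\text{IWAE}}^T(\phi)$ as the two endpoints of a chain of quantities linked by one application of Jensen's inequality per importance sample $t = 1, \dots, T$. Write $w(z) \triangleq p(x|z)r(z)/q_\phi(z|x)$ for the importance weight (mere shorthand). The key observation is that drawing $z_t \sim q_\phi(z|x)$ from the mixture is equivalent to drawing a component index $c_t \sim \text{Categorical}(\alpha_{\cdot,\phi}(x))$ and then $z_t \sim q_{c_t,\phi}(z|x)$; hence, for the stratified samples $\{z_{kt}\}_{k=1}^K$ at a fixed $t$, one has $\sum_{k=1}^K \alpha_{k,\phi}(x)\, w(z_{kt}) = \E_{c_t}\!\left[w(z_{c_t,t})\right]$, a conditional expectation over $c_t$ with the samples held fixed.

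Next I would define, for $j = 0, 1, \dots, T$, the interpolating quantity
\[
A_j \triangleq \E\!\left[\log\!\left(\tfrac{1}{T}\Big(\sum_{t \le j} w(z_{c_t,t}) + \sum_{t > j}\sum_{k=1}^K \alpha_{k,\phi}(x)\, w(z_{kt})\Big)\right)\right],
\]
where the outer expectation is over all stratified samples $\{z_{kt}\}_{k,t}$ together with the indices $c_1, \dots, c_j$. By construction $A_0 = \mathcal{L}_{\text{SIWAE}}^T(\phi)$, while $A_T = \mathcal{L}_{\text{IWAE}}^T(\phi)$: once the unused draws $z_{kt}$ with $k \ne c_t$ are marginalized out, each surviving $z_{c_t,t}$ is distributed as a draw from the mixture $q_\phi(z|x)$, independently across $t$, which is precisely the IWAE sampling law. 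To pass from $A_{j-1}$ to $A_j$, I would condition on all the randomness except $c_j$, use the identity above to rewrite the $j$-th summand of $A_{j-1}$ as $\E_{c_j}[w(z_{c_j,j})]$ (the remaining summands do not depend on $c_j$), and then invoke Jensen's inequality for the concave $\log$ to move $\E_{c_j}$ outside, giving $A_{j-1} \ge A_j$. Chaining the $T$ inequalities yields $\mathcal{L}_{\text{SIWAE}}^T(\phi) \ge \mathcal{L}_{\text{IWAE}}^T(\phi)$. For the strict statement when $K > 1$, I would observe that a Jensen step is an equality only if, conditioned on the rest, $\frac{1}{T}\sum_t w(z_{c_t,t})$ is almost surely independent of $c_j$; when $K > 1$ and the components are genuinely distinct (with $w$ non-constant), this fails, so at least one link is strict.

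The main obstacle I anticipate is bookkeeping rather than a conceptual one: keeping track of exactly which joint law the expectation in $A_j$ is taken over, verifying that marginalizing the unused stratified samples at step $j = T$ really recovers the IWAE expectation (independence of the $c_t$ and of the component draws across $t$ is what makes this clean), and stating the precise nondegeneracy hypothesis under which the inequality is strict rather than merely an equality. As an aside, one can collapse the $T$-step chain into a single application of Jensen by pulling $\E_{c_{1:T}}$ out of the $\log$ all at once; I favor the incremental version because it makes the per-sample structure transparent and matches the ``repeated application of Jensen'' remark in \Cref{sec:approach}.
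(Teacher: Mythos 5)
Your argument is correct, but it is not the paper's argument, and the difference is substantive rather than cosmetic. The paper applies Jensen once over the component index: conditioning on the stratified samples it bounds $\log\sum_k \alpha_{k,\phi}(x)\,y_k \ge \sum_k \alpha_{k,\phi}(x)\log y_k$ with $y_k=\tfrac1T\sum_t w(z_{kt})$, and then asserts that the resulting quantity $\sum_k \alpha_{k,\phi}(x)\,\E_{z_{k1},\dots,z_{kT}\sim q_{k,\phi}}\bigl[\log\tfrac1T\sum_t w(z_{kt})\bigr]$ \emph{equals} $\mathcal{L}^T_{\text{IWAE}}$. That identification is only valid for $T=1$: the intermediate quantity corresponds to drawing a single component index and then all $T$ samples from that one component, whereas IWAE draws the $T$ samples i.i.d.\ from the mixture, and a mixture of product measures is not the product of the mixture. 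Concretely, with $K=2$, $T=2$, $\alpha_1=\alpha_2=\tfrac12$, disjoint-support components and $w$ equal to constants $a$ on the first support and $b$ on the second (realizable, e.g., $q_\phi=r$ and a piecewise-constant likelihood), the paper's intermediate quantity is $\tfrac12\log a+\tfrac12\log b$ while $\mathcal{L}^2_{\text{IWAE}}=\tfrac14\log a+\tfrac14\log b+\tfrac12\log\tfrac{a+b}{2}$, which is strictly larger when $a\ne b$; so the paper's chain does not close as written. Your construction is precisely what repairs this: introducing explicit per-sample indices $c_t\sim\mathrm{Categorical}(\alpha_{\cdot,\phi}(x))$ independent of the stratified draws, writing $\sum_k\alpha_{k,\phi}(x)\,w(z_{kt})=\E_{c_t}[w(z_{c_t,t})]$, and pushing each $\E_{c_t}$ inside the log (one $t$ at a time via your interpolants $A_0\ge A_1\ge\dots\ge A_T$, or all at once as in your aside) terminates at the genuine IWAE expectation, because after marginalizing the unused draws the $z_{c_t,t}$ are i.i.d.\ mixture samples. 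Your endpoint identifications and the Jensen steps are sound, and your remark that strictness (``tighter'' rather than merely $\ge$) requires a nondegeneracy condition on the components is a point the paper also leaves unaddressed. In short: same Jensen-based spirit, but your per-sample component-selection bookkeeping is a genuinely different decomposition, and it is the one that actually establishes $\mathcal{L}^T_{\text{SIWAE}}\ge\mathcal{L}^T_{\text{IWAE}}$ for all $T$, not just $T=1$.
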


\begin{proof}
\begin{align*}
&\mathcal{L}_\text{SIWAE}^T(\phi) \equiv\nonumber\\
&\equiv \E_{\{z_{kt} \sim  q_{k, \phi}(z_k|x)\}_{t=1,k=1}^{T,K}} \left[ \vphantom{\sum_{k=1}^K} \right.\\
&~{ } \quad\qquad \left. \log \frac{1}{T} \sum_{t=1}^T \sum_{k=1}^K  \alpha_{k, \phi}(x) \frac{p(x|z_{kt}) r(z_{kt})}{q_\phi(z_{kt}|x)}\right]\\
&\ge \sum_{k=1}^K  \alpha_{k, \phi}(x) \E_{\{z_{kt} \sim  q_{k, \phi}(z_{kt}|x)\}_{t=1,k=1}^{T,K}} \left[ \vphantom{\sum_{k=1}^K} \right.\\
&~{ } \quad\qquad \left. \log \frac{1}{T} \sum_{t=1}^T  \frac{ p(x|z_{kt})r(z_{kt})}{q_\phi(z_{kt}|x)}\right]\\
&= \E_{\{z_t \sim  q_\phi(z|x)\}_{t=1}^{T}} \left[ \log \frac{1}{T} \sum_{t=1}^T  \frac{ p(x | z_t)r(z_t) }{q_\phi(z_t|x)}\right] \\
&\equiv \mathcal{L}_\text{IWAE}^T(\phi)
\end{align*}
\label{thm:theorem2}
\end{proof}

\section{Experimental Details}\label{sec:experiments_details}
\subsection{Single Column MNIST Classification}
\textbf{Architecture.}  For our experiments, we use an MLP architecture with 4 layers of 128 hidden units and ELU activation functions \citep{Clevert:15} for the encoder. The last layer predicts the parameters for a distribution over a two dimensional latent variable (in \autoref{sec:mnist-1d}, we run the same experiment but with a one dimensional latent variable). For all models, we use a mixture of $K$ multivariate normal distributions with full covariance with mixture weights as a learnable parameter which is predicted by the encoder.  For models with $K=1$, this reduces to a single multivariate normal distribution with no learnable mixture weights.  For the decoder, we use an affine transformation that outputs the logits for a categorical distribution. We use such a simple architecture for decoder to encourage the encoder to capture potentially multimodal information about the class of an image. For our prior distribution $r(z)$, we use a trainable mixture of Gaussians, although we found the prior makes relatively little difference in the final results.  

\textbf{Training Procedure. } For a single component model, we optimize both the traditional evidence lower bound (ELBO), as well as the importance weighted estimate of the evidence (IWAE). For the mixture models, we use stratified sampling to compute the ELBO (SELBO), as well as the Stratified-IWAE (SIWAE) derived in Section~\ref{sec:approach}.  We use $K=[1, 2, 5, 10]$ for the number of mixture components, and $T=[1, 2, 5, 10]$ for the number of samples drawn \emph{per component}. To regulate the information content of the posterior, we use a $\beta=0.05$ penalty on the KL divergence term (and the equivalent term in the SIWAE objective), as used in \cite{Higgins:17}.  Because one-column MNIST does not have an established benchmark, we also train two deterministic models to use as baselines: (1) a ``pyramid'' MLP with 5 layers of 256 hidden units to approximate the peak deterministic accuracy, and (2) a ``bottleneck'' MLP with the same architecture as our VIB models, therefore containing a two dimensional ``latent space.''  All models were trained for 50 epochs using the Adam optimizer \cite{Adam} with a learning rate of 0.001 which was decayed by 0.5 every 15000 train steps. When training SELBO models, $T$ refers to the number of samples drawn to compute the Monte-Carlo estimate of the objective.

\textbf{Evaluation. } To evaluate the accuracy of the model, we first need the posterior predictive distribution. We sample the posterior predictive by decoding $10^{4}$ samples from $q_{\phi}(z|x)$ and averaging the class probabilities returned by each sample. This marginalizes over the uncertainty in the latent variables and if our prior beliefs are correct, nominally produces calibrated probabilities. From these probabilities, we take the highest-probability class, and consider that the prediction of the model. Accuracy is then defined as the number of correct predictions divided by the total number of examples in the test set.

We also compute the \emph{Expected Calibration Error} \citep[ECE; ][]{Guo:17}.  For this, we decode 1000 samples from the posterior and compute the average probability of each class.  We take the model prediction to then be \begin{equation}
    \hat{y}=\text{argmax}p(y|z)
\end{equation}
This prediction is labeled correct if it is equal to the true class label $y$, otherwise it is labeled incorrect.  In addition to checking if each prediction is correct, we also get the predicted confidence for the true class $p(y_{true})$.  We then rank our data and divide into 10 bins such that each bin contains 10\% of the examples, ranked by confidence in the true class $p(y_{true})$.  The confidence of a bin is computed as $p^{bin}(y_{true}=\frac{1}{n}\sum_{i}^{n}p_(y_{true}^{(n)})$.  The probability of the truth for a given bin is given by the fraction of predictions in the bin which were correct.  The expected calibration error is then defined as the average absolute value of difference between the confidence in a bin and the probability of correctness in that bin.

\section{Additional Experiment Results}

\subsection{Toy Problem}

In \autoref{fig:toy-dataset} we visualize the data from the toy experiment and the training curves for the ELBO estimators.

\begin{figure*}[t]
    \centering
    \begin{subfigure}[t]{0.5\textwidth}
    \includegraphics[width=\hsize]{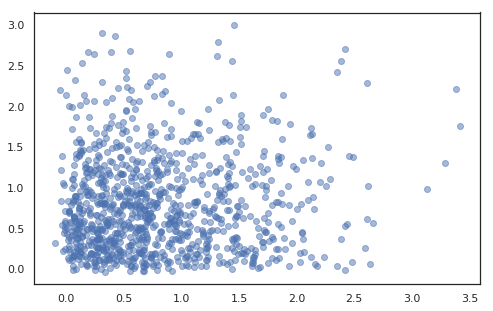}
    \end{subfigure}%
    ~
    \begin{subfigure}[t]{0.5\textwidth}
    \includegraphics[width=\hsize]{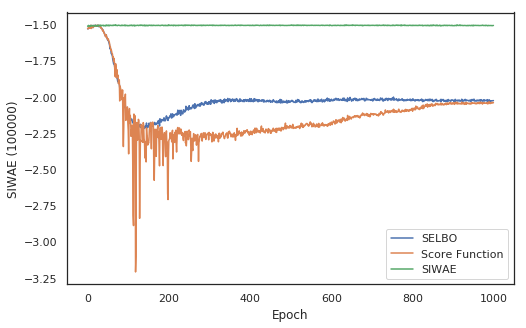}
    \end{subfigure}
    \caption{On the left is a toy dataset generated by sampling $z \sim \mathcal{N}(0, I); x \sim \mathcal{N}(|z|, \sigma^2I)$. On the right are SIWAE values at each epoch while training posteriors using SELBO, SIWAE, and score function estimators of the evidence. Due to mixture components collapse, the SELBO and score function posteriors achieve lower values of SIWAE.}
    \label{fig:toy-dataset}
\end{figure*}

\begin{figure*}[h]
\centering
    \includegraphics[width=0.5\textwidth]{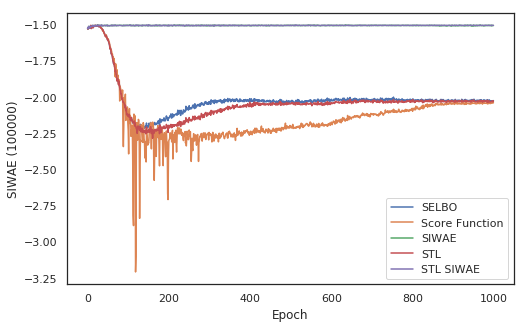}
    \caption{Training results for the toy experiment with the addition of ``sticking-the-landing''
    versions of SELBO and SIWAE. We observe no significant difference between the final training 
    SIWAEs of STL-SELBO vs. SELBO (-2.026 vs. -2.024 respectively) and STL-SIWAE vs. SIWAE (-1.505 and -1.505 respectively)}
    \label{fig:toy-dataset-stl}
\end{figure*}

\begin{figure*}[h]
    \centering
    \begin{subfigure}[t]{0.5\textwidth}
    \includegraphics[width=\hsize]{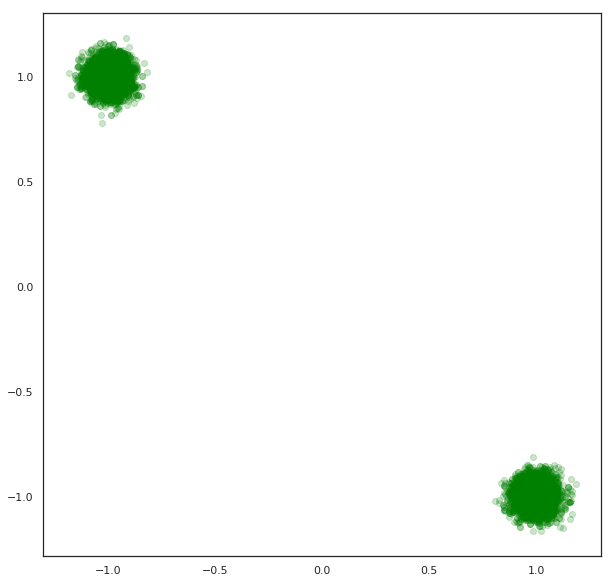}
    \caption{STL-SELBO}
    \end{subfigure}%
    ~
    \begin{subfigure}[t]{0.5\textwidth}
    \includegraphics[width=\hsize]{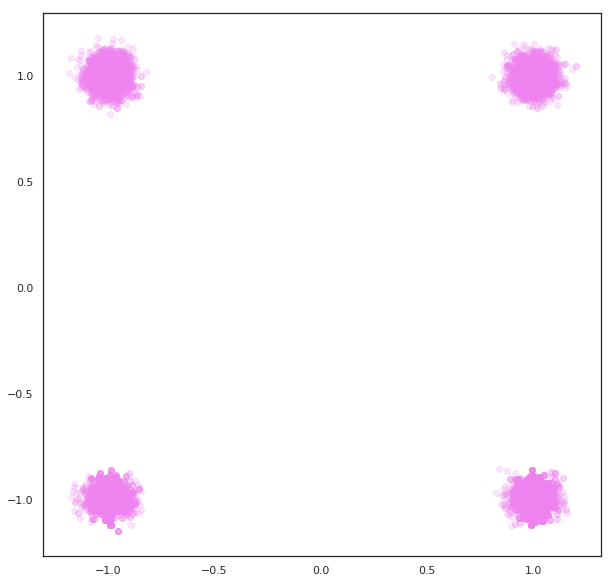}
    \caption{STL-SIWAE posterior}
    \end{subfigure}%
    \caption{Samples from of the learned (implicit) posteriors for the observed data point $(1, 1)$ for the STL-SELBO and STL-SIWAE \citep{Roeder:17}.}
    \label{fig:toy-posteriors-stl}
\end{figure*}

In addition to the experiments presented in the main body of the paper, we also ran a comparison to the "sticking the landing" (STL), pathwise derivative estimator, which results in reduced variance in the model gradients (with a potential increase in bias \cite{Tucker:19}).  Our main interest lies in determining if the STL gradient estimator is itself sufficient for fitting multimodal posteriors, or if the use of SIWAE is truly necessary for inferring multimodality.  We ran our test on the toy problem using STL to evaluate both the SELBO and the SIWAE losses.  We show the evidence, as measured by a $10^5$ sample SIWAE as a function of training epochs in \Autoref{fig:toy-dataset-stl}.  We find that for both SELBO and SIWAE, the model evidence is unchanged by using the STL gradient estimator, indicating that STL does not help in converging to a better model.  Furthermore, in \Autoref{fig:toy-posteriors-stl}, we show samples from the learned posterior.  We find that using SELBO, even with STL, results in a model which does not discover all modes in the posterior.  The fact that SELBO and SIWAE give the same results as STL-SELBO and STL-SIWAE suggests that it is the SIWAE loss itself, rather than the gradient estimator, that is providing the necessary ingredients for detecting multimodality.  However, we speculate that STL may offer more relative improvement in situations where the bias introduced by SELBO is low compared to the variance introduced by SIWAE.

\subsection{Single Column MNIST Classification}
\begin{figure*}[ht]
    \centering
    \includegraphics[width=\hsize]{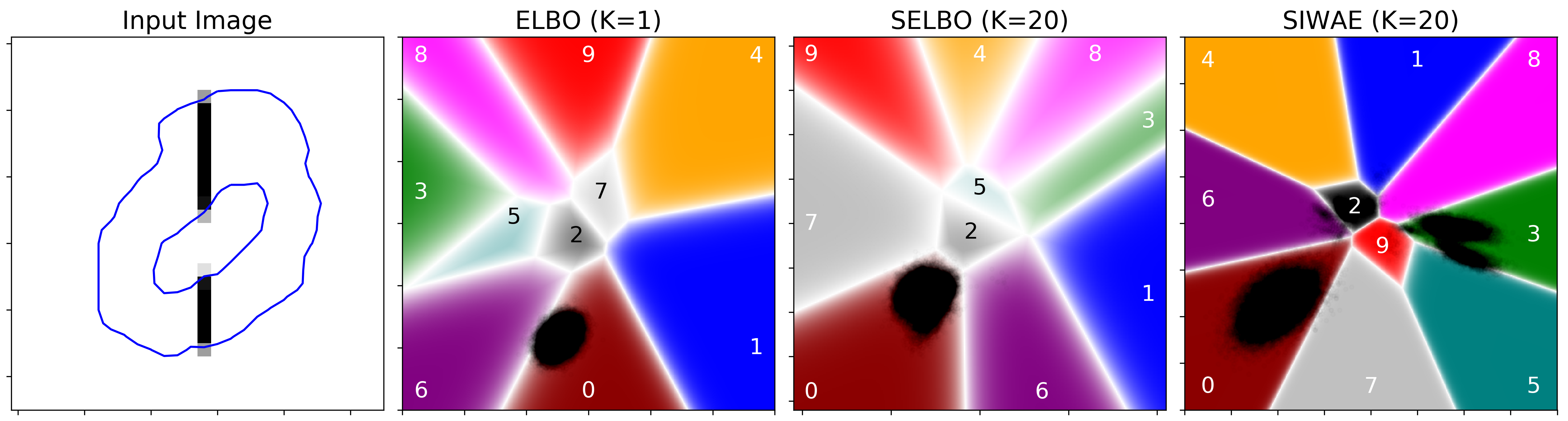}
    \includegraphics[width=\hsize]{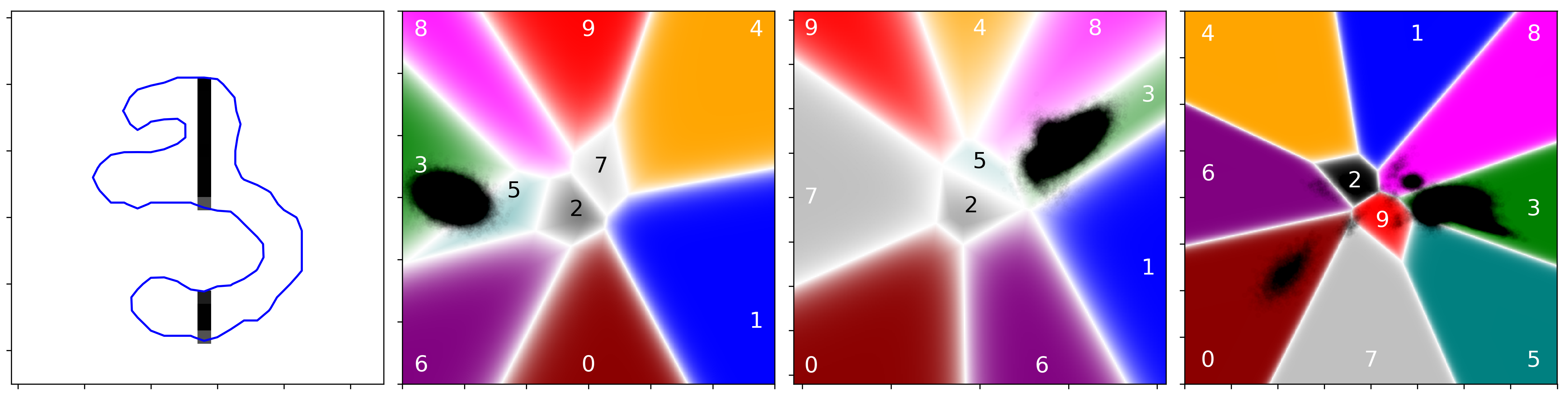}
    \caption{Similar to \autoref{fig:local_elbo_latent}, but for two examples where the input to the model is extremely similar even though the inputs are from different classes.  We find that while all models infer the correct class, models trained with SIWAE are better suited to recognize the similarity between these two images, assigning some probability to the other class.}
    \label{fig:latent_space_03}
\end{figure*}
In the main text of the paper, we showed the latent space distribution for an image wherein the ambiguity introduced by the use of a single column in the inference resulted in a multimodal latent space.  Furthermore we showed that SIWAE was able to detect and capture this multimodality much better than ELBO or SELBO, which either are structurally unequipped to do so (ELBO), or which are penalized for doing so (SELBO).  To show that the capacity for multimodality aids in the interpretability of our model, consider the images shown in \Autoref{fig:latent_space_03}.  Both images, while having quite different true appearances, appear nearly identical when viewed as only their center column.  Therefore, a model should classify this pair as "either a 0 or a 3", since both of these classes have this appearance.  However, this is not observed when SELBO is used.  The model (correctly) predicts a zero for the top image, and a three for the bottom image, with no indication that the other is a possibility.  In contrast, the SIWAE model also predicts the correct class, but correctly assigns a non-negligible fraction of its samples to the other class.  In this sense, uncertainty is measured in the latent space itself using the posterior distribution.

\subsection{One-dimensional latent variable}\label{sec:mnist-1d}

\begin{figure*}[t]
    \centering 
    \begin{subfigure}[t]{0.4\textwidth}
    \centering 
    \includegraphics[height=7cm]{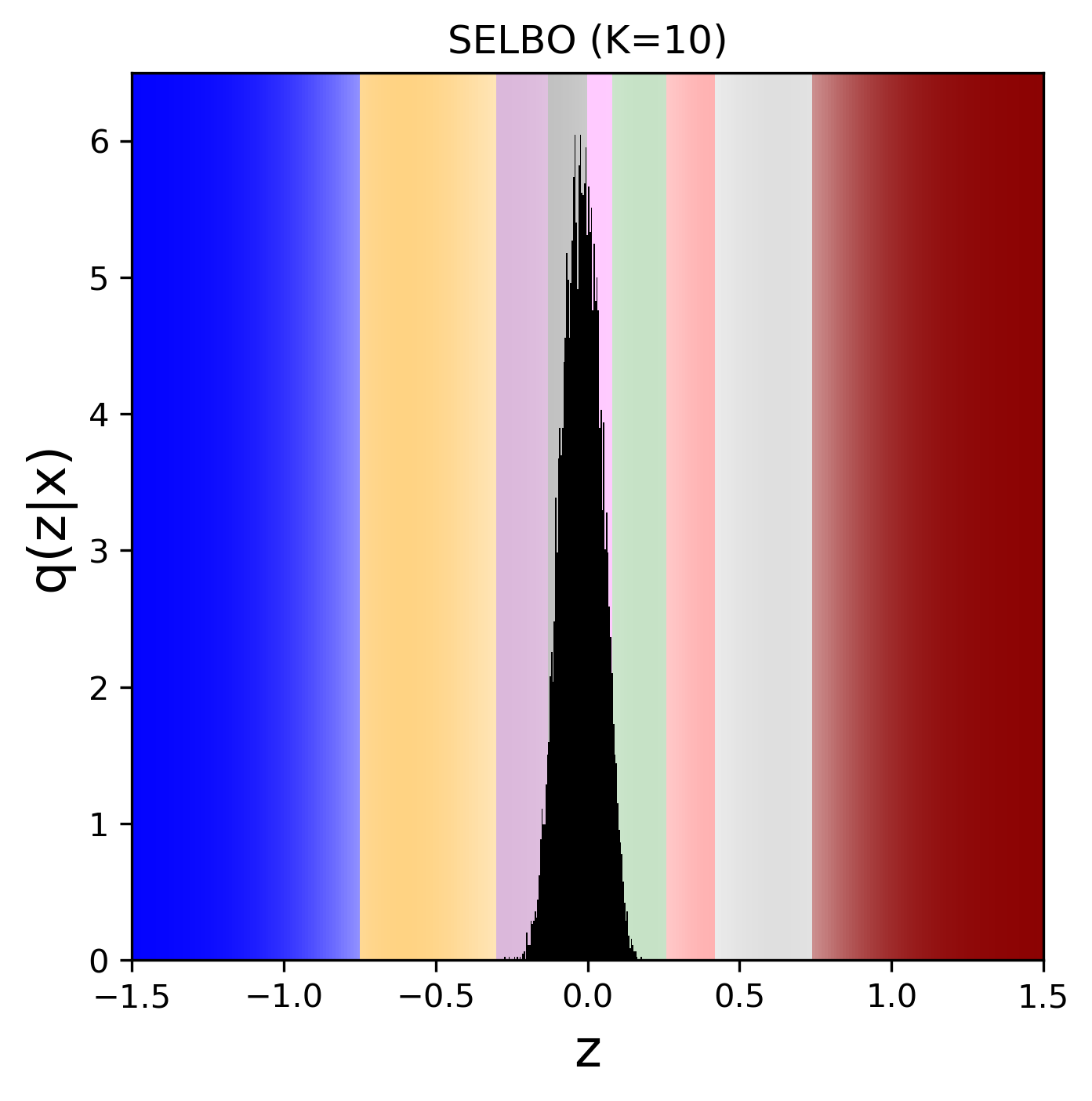}
    \end{subfigure}%
    ~
    \begin{subfigure}[t]{0.4\textwidth}
    \centering 
    \includegraphics[height=7cm]{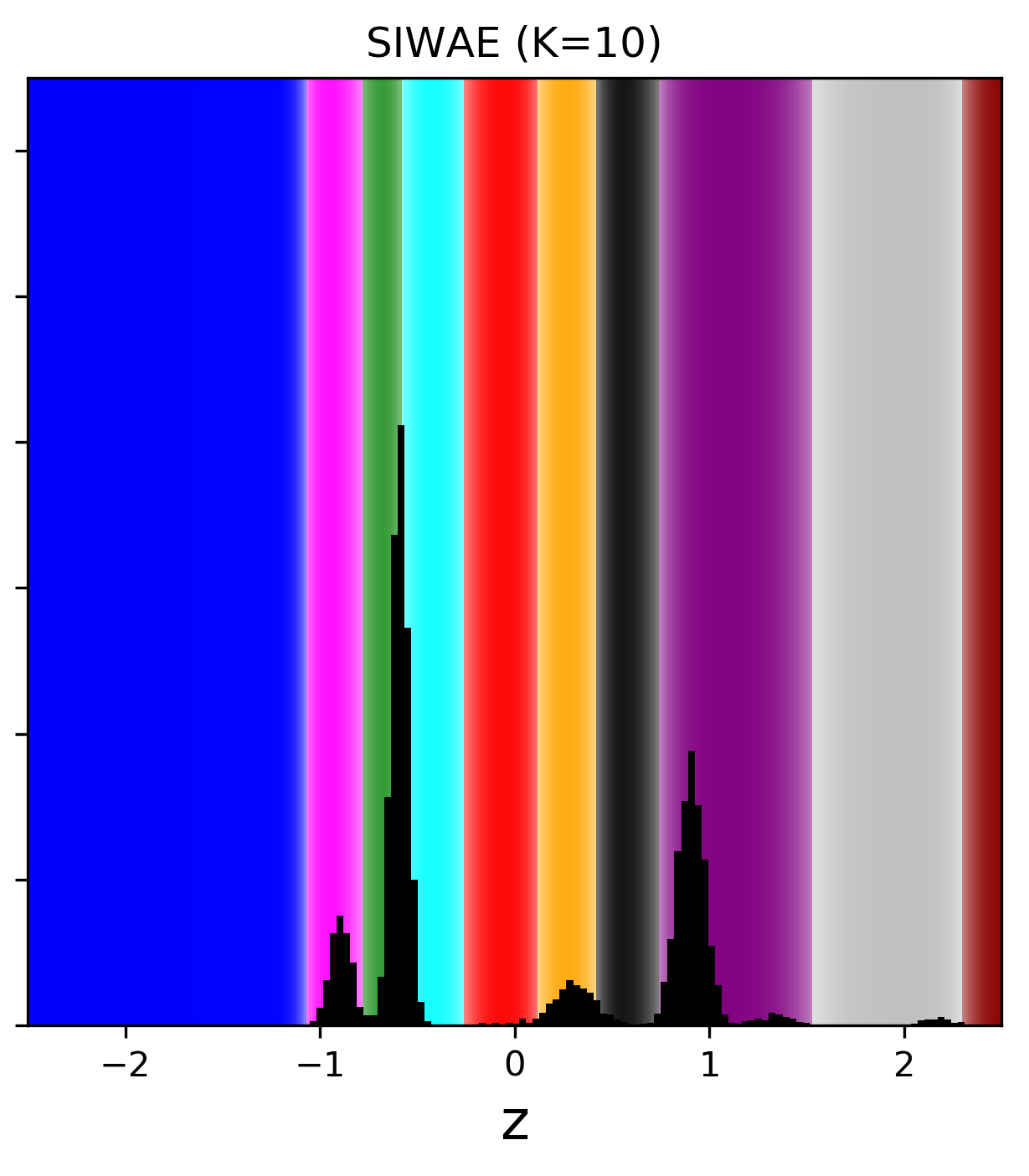}
    \end{subfigure}%
    \caption{Similar to \autoref{fig:local_elbo_latent}, but using a one-dimensional latent variable.  The input example is the same as in \autoref{fig:local_elbo_latent}.  The left panel shows the latent representation found by optimizing the SELBO objective.  Only a single mode is identified in the latent space.  In contrast, optimizing the SIWAE objective produces a latent representation with multiple distinct modes.}
    \label{fig:one_dim}
\end{figure*}

We ran the same MNIST classification experiment using a one-dimensional latent variable. In general, fitting a one-dimensional latent variable should result in an appreciable drop in accuracy because a single-dimensional bottleneck allows for a maximum of two decision boundaries for a particular class, and therefore forces a latent representation which becomes multimodal in the presence of any complex structure in the uncertainty. Therefore, a reasonable expectation is that this should force a model trained with SELBO to learn distinct modes in the encoder.  However, we experimented with training for this objective using multiple components as well as with a single component, and in all cases achieve an accuracy of 51\% or lower, which is substantially worse than can be reached in two dimensions. By dissecting this model, we see again that the model reduces to a single mode in the posterior, either by assigning all of the component weights to a single mode, or by merging all of the separate modes together (\autoref{fig:one_dim}). This strongly suggests that the SELBO objective actively opposes the formation of multiple modes in the posterior.

Using the SIWAE objective instead of the SELBO, we see our accuracy climb to $76\%$, nearly equivalent to the peak accuracy in two dimensions. We also find that using the SIWAE objective with only a single component (.e. IWAE) outperforms the traditional ELBO substantially as well, achieving 63\% accuracy. However, there is still a substantial gap between the IWAE model and the SIWAE model. All of the probabilistic models outperform a deterministic model in this case, which achieves a peak accuracy of merely 25\%.

\subsection{Single Column MNIST VAE}\label{sec:experiments_appendix}

\begin{figure}[htb]
    \centering
    \includegraphics[width=\hsize]{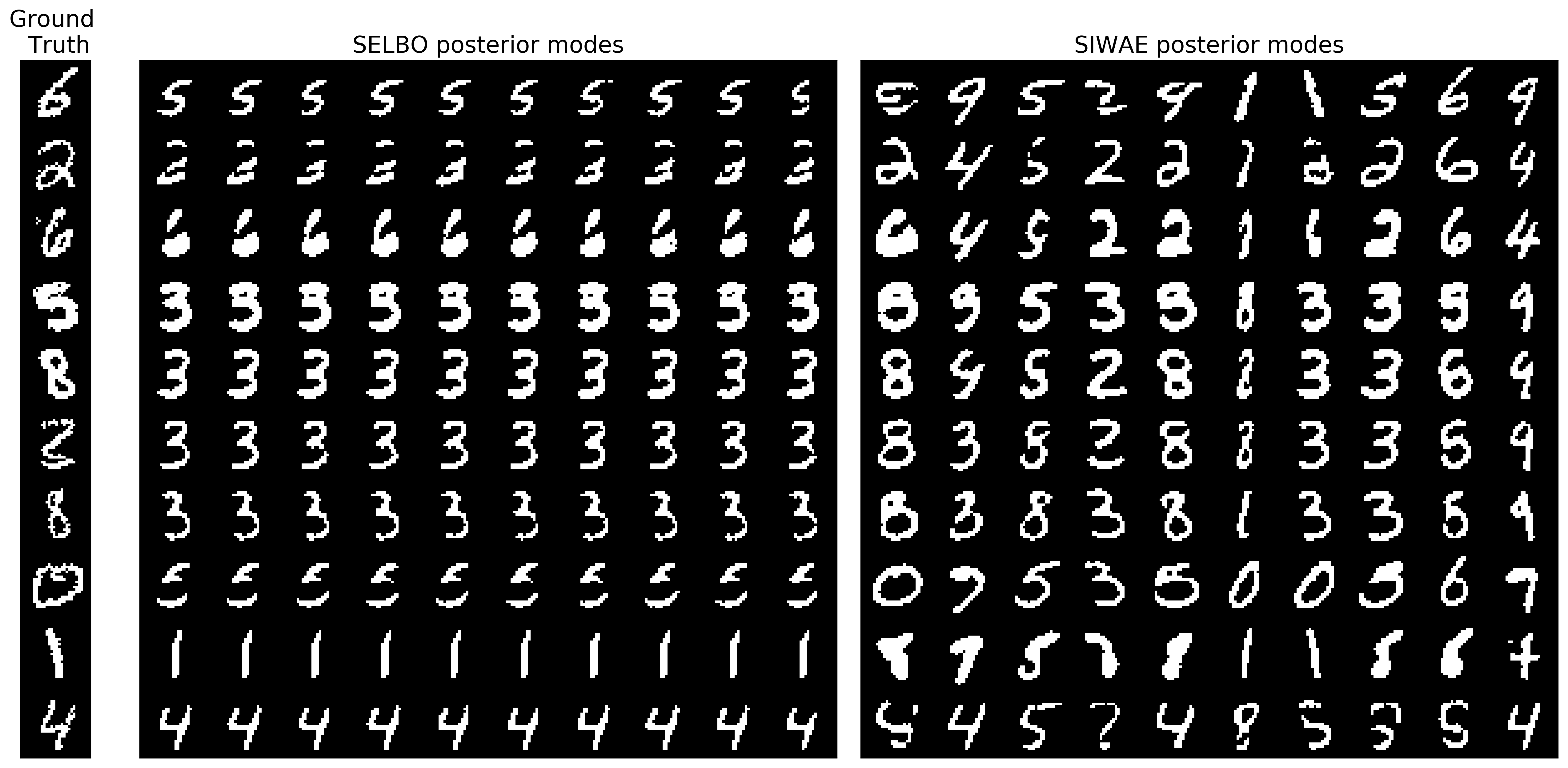}
    \caption{A visualization of modes from the posterior distribution of models trained with SELBO, compared to models trained with SIWAE.  The leftmost column shows the true image, from which the centermost column was fed to the encoder.  The central block of images shows the reconstructed modes of the posterior for a model trained with SELBO.  The rightmost block of images shows reconstructed modes from the posterior of a model trained with SIWAE.  Each column corresponds to a different mode in the posterior.  The SELBO modes all appear identical, suggesting that the model is not leveraging multimodality.  In contrast, the SIWAE models learn a diverse assortment of modes, which offer competing explanations of the output data.  Note that at least one of the SIWAE modes provides an accurate description of the data, while the same cannot be said for SELBO modes.}
    \label{fig:single_column_vae_ims}
\end{figure}
One qualitative metric that helps to asses the performance of VAEs, independent of quantitative metrics such as the log-evidence, which may not always be entirely informative, is the appearance of images drawn from the model.  This can be examined in multiple ways.  One can draw samples from the generative model to see if they appear qualitatively similar to images from the dataset.  Alternatively, one can generate reconstructions using samples from the inference model to see if the inference appears reasonable.  To assess the performance of SIWAE compared to SELBO, we found it more reasonable to examine the inference model.

To generate templates of each mode from the posterior, we passed the mode of the encoder component distributions to the decoder, and took the mode of each decoded image.  These are shown in \Autoref{fig:single_column_vae_ims}.  We find that all modes from SELBO make roughly the same prediction, showing that the modes have collapsed together.  We also find that these modes often do not capture the correct appearance of the input data over any component of the encoder.  In some cases, this may be an example of the decoder trying to "hedge its bets" to make up for the inability of SELBO to recover multimodality, and therefore predicting nearly 0.5 for pixels which have competing explanations.  However, it should be pointed out that this cripples the generative model, as the samples produced are of lower quality than they could otherwise be if uncertainty were represented correctly.  

In contrast, SIWAE does not encounter issues with collapsing modes, and produces multiple different explanations for each instance fed to the encoder.  Notably, we find that for more unambiguous inputs (e.g. zeros), the encoder produces multiple template images from the correct class, but having different stylistic appearance.  For images which may be explained by multiple different classes, we find that the modes produce a census of the potential classes.  We find that at least one mode will typically provide a good explanation for the output data, with some exceptions occurring for rarer images (such as a crossed seven, which only occurs in roughly 10\% of sevens).  For the same reason, we suspect that samples drawn from the generative model will exhibit qualitative appearance more indicative of examples from the dataset.  To this end, the ability to represent multimodality has prevented the generative model from being hindered by the inference model, as is observed in models using SELBO.

\subsection{Full Image VAE}
\label{sec:full_im_vae}
Our previous experiments all indicate that SIWAE offers advantages over SELBO when the input data does not contain sufficient information to unambiguously determine the output quantity.  However, when the input data is not ambiguous, these advantages are no longer present and SIWAE may therefore offer fewer relative improvements compared to SELBO.  At the same time, experiments comparing IWAE to ELBO indicate that losses like SIWAE may also exhibit higher variance in the gradients, which may affect the training dynamics of the model, resulting in worse outcomes \cite{Rainforth:18}.  While our previous experiments have shown that higher variance in the gradients is overcome by the ability to successfully leverage multimodality when it exists, it is logical that higher variance gradients could become a detriment to the relative performance when multimodality offers no advantages.  We therefore expect that SELBO should perform equal to or better than SIWAE in clean and simple problems where there is no need for multimodality in the latent representation of the data.

To that end, we trained a VAE for 100 epochs on the standard binarized MNIST benchmark dataset, with no corruption of the inputs.  Similar to our previous experiment, we evaluate performance as a function of $K$ and $T$, and the dimensionality of the latent space using a 100 sample SIWAE estimate of the evidence.  Details of our training procedure and experimental results can be found in the supplementary material (\autoref{sec:training_supplement} and \autoref{sec:experiments_appendix}).  

We find that when the latent space is low dimensionality, the results are similar to our results from previous experiments.  Model performance is improved by using SIWAE instead of SELBO, with the performance improving by increasing either $K$ or $T$ (though the improvement with $T$ is ambiguous).  This makes sense, as the encoder is able to overcome the limitations imposed by low dimensionality by using multimodality to represent complex nonlinear structure.  In higher dimensionality however,  we find that SELBO performs better than SIWAE.  This also makes sense intuitively:  as the number of dimensions increases, so too does the number of ways in which two unimodal entities can differ.  Therefore, the advantages that multimodality provides in low dimensions no longer exist as the dimensionality gets sufficiently large.  We therefore expect that using SELBO in larger encoding spaces gives qualitatively better results, though this comes at the cost of explainability. In this regard we present SIWAE and SELBO as two different tools enabling exploration of two different regimes.

\begin{figure}[htb]
    \centering
    \includegraphics[width=\hsize]{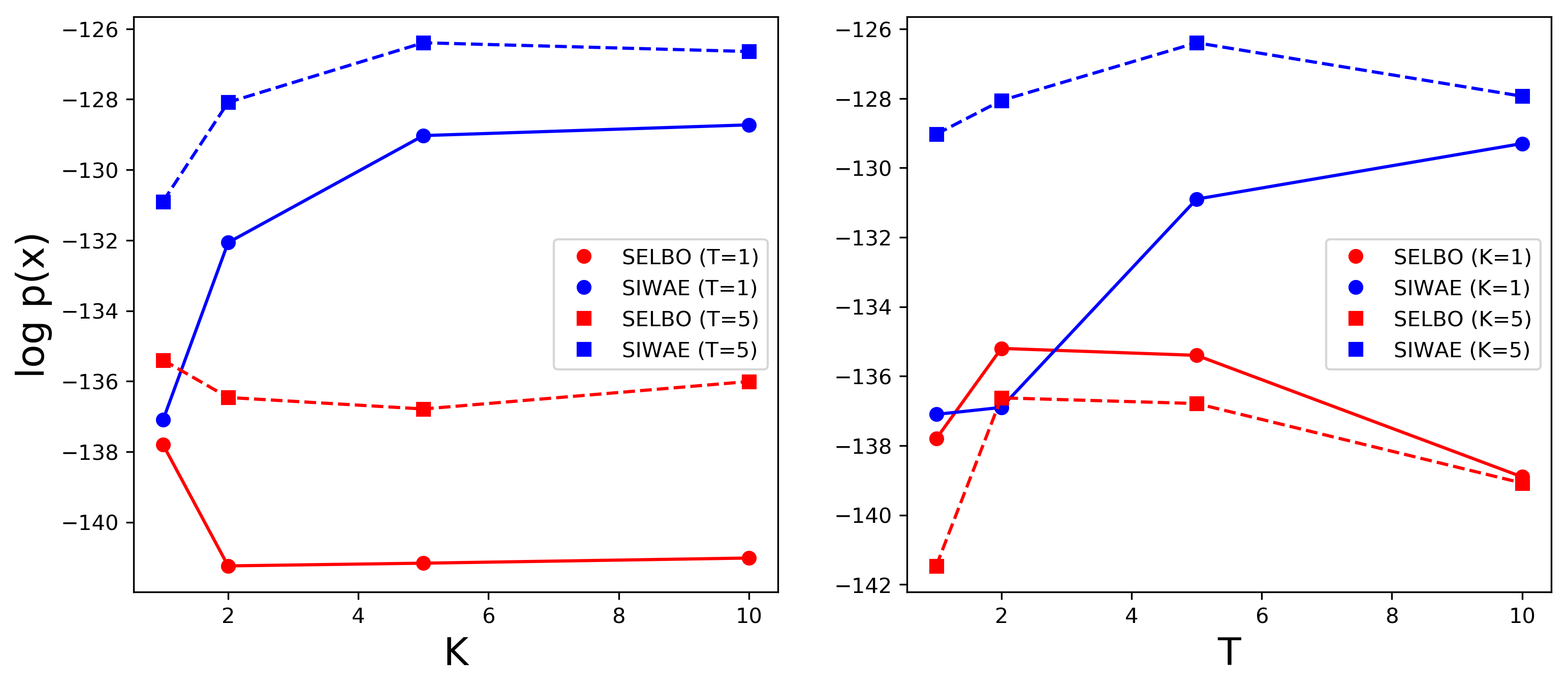}
    \caption{Same as Figure~\ref{fig:singlecolumnevidence}, but for a model given the full MNIST image as an input, and with a 2 dimensional latent space.  We find that SIWAE models improve with increasing either $K$ or $T$, while SELBO models appear to have ambiguous improvement with either $K$ or $T$.}
    \label{fig:fullimVAE2}
\end{figure}

\subsection{MNIST Style Modality}\label{sec:mnist-style}

\begin{figure*}[t]
\centering
\includegraphics[width=\hsize]{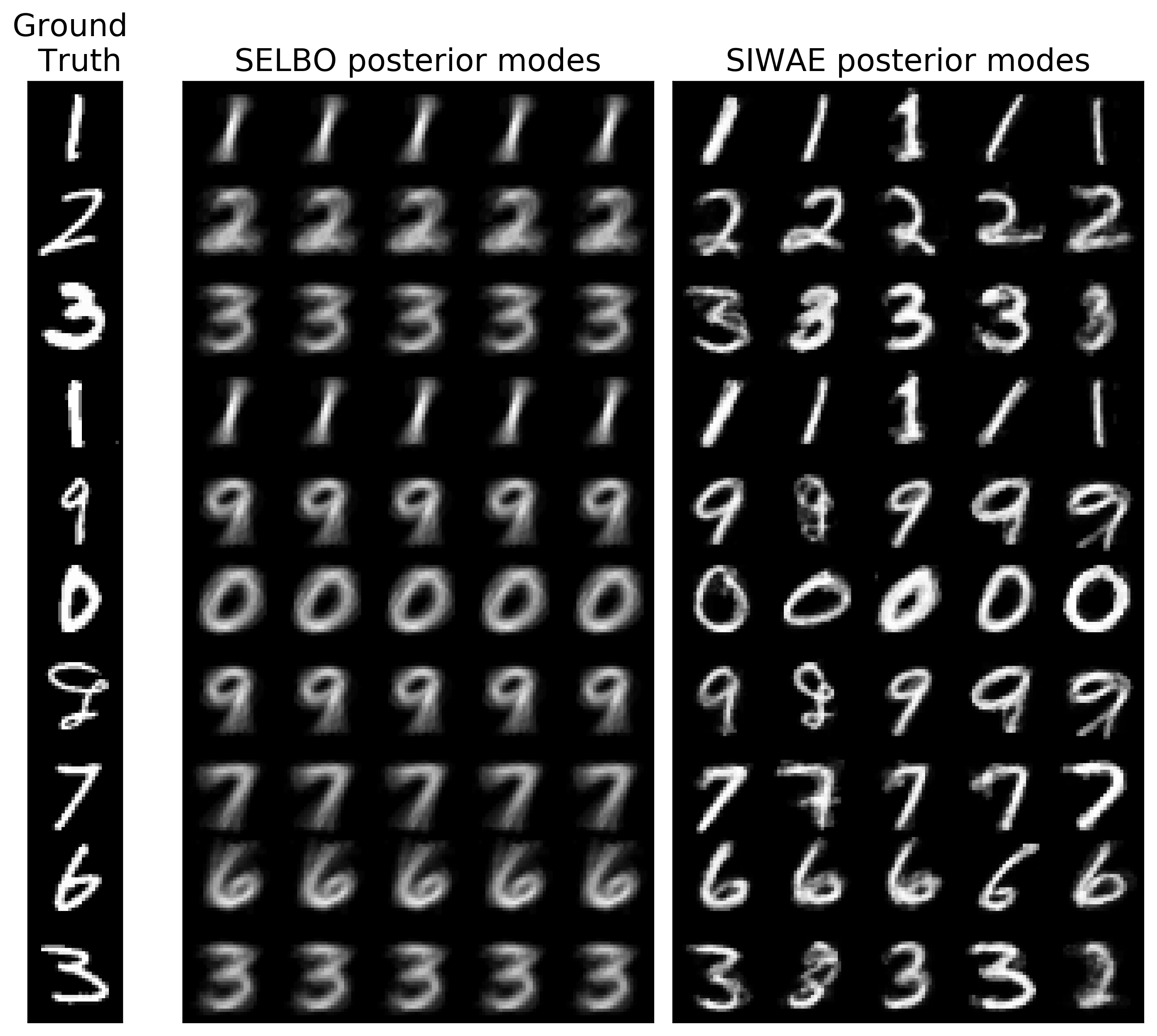}
\caption{Posterior modes learned by SELBO and SIWAE when trained to predict a random image from the same class.  SELBO learns only to infer the class of the image, producing a fuzzy reconstruction that is able to explain all different styles from that class simultaneously.  SIWAE instead learns to encode multiple different styles for each image.  This results in perceptually sharper reconstructions, and also in a better capture of uncertainty in the data.}
\label{fig:style_modes}
\end{figure*}

Thus far, we have shown that when input information to the encoder is limited, SELBO is unable to offer competing explanations for data.  We have shown in \autoref{sec:VIB} that this is a detriment to model calibration and we have also shown in \autoref{sec:single_column_vae} that this limits generative model performance.  In these two experiments, the model effectively had to represent class-specific explanations as each mode in the posterior.  However, equally interesting is if multimodality can be used to represent style in images.

To test this, we trained a VAE on MNIST, where the output target was a randomly chosen image from the same class as the input.  This effectively gives the model unambiguous information as to the class, but is completely ambiguous with respect to style.  Our hypothesis is that because SIWAE can provide multiple explanations for an input, that it will produce multiple images with different styles.  SELBO meanwhile would be penalized for producing multiple explanations, and would therefore produce a single fuzzy image for the output.

In \autoref{fig:style_modes}, we show the decoder means of the encoder style modes learned by SELBO and SIWAE models.  As expected, we find that SELBO learns only a single style mode, with all 5 possible encoder components producing roughly the same image, indicating that they have collapsed together.  Furthermore, the single mode learned by SELBO appears fuzzy, indicating that uncertainty in the output pixels is being explained by the decoder.  This makes intuitive sense:  SELBO penalizes any posterior mode for providing an explanation that is incorrect, even if that explanation is reasonable.  The model therefore compensates by learning only 1 explanation, but making that explanation as reasonable as possible.  However, the decoder can only represent uncertainty on each pixel individually, so in making a ``reasonable'' explanation, it can only make an explanation that is a blurred combination of all digits.  

In contrast, when trained with SIWAE, each of the posterior modes produces a different explanation for the data.  These different explanations correspond to different styles of each digit. This corresponds to the different styles of ones in the MNIST training set.  By allowing the posterior to provide multiple explanations, the decoder produces outputs which are less uncertain.  This not only results in improved visual appearance of the outputs, but also shows that SIWAE is able to represent more complex forms of uncertainty in the posterior predictive distribution.


In the main text of the paper, we presented several results with regard to VAE models which were given the full image as an input.  Here we will show the full details informing these results.  The first result was that in low dimensionality, SIWAE models outperformed SELBO models, and exhibited improving performance as a function of $K$ and $T$.  This is shown in \Autoref{fig:fullimVAE2}.  For SELBO, we do not find a corresponding improvment, as the $K=1$ model outperformed $K>1$.  The origins of this are unclear.  Furthermore, we find that SELBO does not appear to exhibit strong dependence on $T$.  

\section{Training procedure}\label{sec:training_supplement}
In this appendix, we describe several additions to the traditional training procedure, which we found to aid in optimization of all VAE models.

\subsection{Burn-in against the prior}
Typical initialization schemes attempt to facilitate gradient backpropagation by ensuring that the first two moments of the activations remain approximately 0, and 1, respectively.  We found that these initialization schemes don't typically produce a posterior distribution tuned reasonably to the prior.  This violates our intuition, as the posterior in the absence of evidence should be identical to the prior.  Furthermore, even if the model were initialized such that the posterior and prior were aligned, the alignment would quickly be broken by large bulk gradients being given to the posterior from the likelihood.   We observed that this was a consequence of the decoder being not well tuned to the dataset at initialization.  For example, the edge pixels in MNIST are essentially all zero, but the initial decoder predicts a uniform distribution over these pixels.  The model would therefore systematically shift the posterior to compensate for the poor initialization of the likelihood.  This bulk shift early on in training often produced a final posterior that was well tuned to the likelihood, but poorly tuned to the prior.

Our simple solution to this problem was to burn in the decoder so that the initial decoder distribution was reflective of our prior distribution over the dataset.  To do this, we fed samples from the prior to the decoder, and attempted to maximize the expected log-likelihood of the images given the prior samples $p(x|z)$.  Furthermore, because the encoder could often be improperly tuned against the prior (a worsening problem in higher dimensionality), we attempted to uniformly spread the encoder across the prior.  This was accomplished by minimizing the cross entropy between the prior and the posterior $H_{x}(r,q)=\mathbb{E}_{z\sim r(z)} q(z|x)$.  This optimization was performed jointly for both the encoder and decoder variables, with the prior held fixed.  Our specific procedure to do this was as follows.
\begin{enumerate}
    \item Draw samples from the prior distribution $r(z)$.
    \item For the prior samples, compute the expected log-likelihood $\mathbb{E}_{z_{i}\sim r(z), x_{i}; i=1...M}[\log{p(x_{i}|z_{i})}]$ over a batch of images in the training set.
    \item For the prior samples, also compute the cross entropy from the posterior $H_{x}(r(z_{i},q(z_{i}|x_{i}))$.
    \item Compute and apply the gradients of the loss  $\mathcal{L}=\mathbb{E}_{z_{i}\sim r(z), x_{i}; i=1...M}[-\log{p(x_{i}|z_{i})}-\log q(z|x)]$ for all encoder and decoder variables.
    \item Repeat until converged.  
\end{enumerate}
In practice, we found that convergence was typically achieved within a single epoch, so for simplicity we ran burn-in for a single epoch.  This produced a decoder which, when fed samples from the prior, would produce predictions consistent with random samples of each pixel from the dataset.  Note that prior samples from the burned in decoder do not resemble images from the dataset, but merely draw from a simplified estimate of the prior distribution for each pixel.  At the same time, this burn in procedure matches the encoder to the prior, which makes some sense, given that we initially only know samples from the prior.  We think that this burn in procedure is a worthwhile practice for initializing latent variable models.

\subsection{Mixture Penalties}
In addition to burning in the encoder and decoder, we also utilized several penalties to avoid common pitfalls that may occur when fitting mixture distributions.  The first was a penalty on the negative entropy of the encoder mixture distribution.
\begin{equation}
    H(\alpha_{k}) = \E_{k=1...K}[ \alpha_{k}\log{\alpha_{k}}]
\end{equation}
The entropy of the mixture distribution governs the probability of an individual mode being responsible for producing a given observation.  Because losses like SIWAE (and even SELBO to a lesser degree), use the mixture probabilities to scale gradients to the component distributions, the model can greedily and spuriously kill off a poorly initialized component.  We found that burn in largely ameliorated this issue, but kept this term in the optimization because it was a useful safeguard.  In order to maintain the validity of our evidence bound when the model was successfully using its components, we used a penalty on the entropy given by
\begin{equation}
    \mathcal{L}(H) = \operatorname{relu}(H-H_{0})
\end{equation}
where $H_{0}$ was a constant that can be chosen.  In practice, we set $H_{0}$ such that a penalty is only incurred if more than 95\% of the probability mass is concentrated in a single component.  

Our penalty on the entropy prevents no single component from amassing more than 95\% of the total component probability.  However this by itself is not sufficient to push the model to utilize every component, as individual components can still receive no probability without a penalty.  We therefore added a second penalty, similar to the first, but meant to ensure that none of the individual encoder modes are spuriously killed off.  To do this, we simply used the following penalty
\begin{equation}
    \mathcal{L}(\alpha) = \sum_{k=1}^{K}\operatorname{relu}(\alpha_{0}-\log{\alpha_{k}})
\end{equation}
where $\alpha_{0}$ is a constant which we set such that the model received a penalty for any component with less than 1\% of the probability mass.  This penalty worsens as the components become less probable, effectively preventing the model from ignoring any one component.

\subsection{Training Details}
The VAE architecture was the same architecture as used in the  \href{https://github.com/tensorflow/probability/blob/master/tensorflow_probability/examples/vae.py}{Tensorflow Probability GitHub example}, with a few small differences.  We used a mixture of Multivariate Normal distributions for the posterior, where each multivariate normal distribution had full covariance, and where the mixture probabilities were learned during training.  For the single-column MNIST VAE, we used the same MLP architecture described in \Autoref{sec:VIB} for the encoder, while the decoder used the tensorflow probability example architecture.  In order to ensure that gradients were passed to the encoder early on in training, we used a skip connection, represented as a single affine layer to project the latent space directly to the output space.  We also experimented with both the affine and tensorflow probability decoders separately, and found that the final results did not depend on the decoder architecture, though the overall performance of all models was better for the nonlinear decoder architectures.

During training of VAE models, our full loss was the sum of the SELBO or SIWAE loss and the mixture penalties:
\begin{equation}
    \mathcal{L}_{total}=\mathcal{L}_\text{SELBO/SIWAE} + \mathcal{L}_{H} + \mathcal{L}_{\alpha}
\end{equation}
We optimized this loss using the Adam optimizer with an initial learning rate of 0.0001, decayed by a factor of 0.5 every 15000 training steps and trained for a total of 100 epochs. We used a batch size of 32 examples and took $T$ samples from each of the $K$ mixture components.

\end{document}